\def\eqref#1{equation~\ref{#1}}
\def\1{\bm{1}}
\def\eps{{\epsilon}}
\DeclareMathAlphabet{\mathsfit}{\encodingdefault}{\sfdefault}{m}{sl}
\SetMathAlphabet{\mathsfit}{bold}{\encodingdefault}{\sfdefault}{bx}{n}
\DeclareMathOperator*{\argmax}{arg\,max}
\def\expandafter\normalsize\expandafter{%
    \normalsize%
    \setlength\abovedisplayskip{1pt}%
    \setlength\belowdisplayskip{1pt}%
    \setlength\abovedisplayshortskip{-5pt}%
    \setlength\belowdisplayshortskip{1pt}%
}
\title{$f$-Policy Gradients: A General Framework for Goal Conditioned RL using $f$-Divergences}
\author{%
Siddhant Agarwal\\
The University of Texas at Austin\\
\texttt{siddhant@cs.utexas.edu}
\And
Ishan Durugkar\\
Sony AI\\
\texttt{ishan.durugkar@sony.com}
\And
Peter Stone\\
The University of Texas at Austin\\
Sony AI\\
\texttt{pstone@cs.utexas.edu}
\And
Amy Zhang\\
The University of Texas at Austin\\
\texttt{amy.zhang@austin.utexas.edu}
}
\begin{document}

\newtheorem*{claim}{Claim}
\newtheorem{lemma}{Lemma}[section]
\newtheorem{corollary}{Corollary}
\newtheorem{theorem}{Theorem}[section]
\newtheorem{proposition}{Proposition}
\newtheorem{innercustomgeneric}{\customgenericname}
\providecommand{\customgenericname}{}
\newcommand{\newcustomtheorem}[2]{%
  \newenvironment{#1}[1]
  {%
   \renewcommand\customgenericname{#2}%
   \renewcommand\theinnercustomgeneric{##1}%
   \innercustomgeneric
  }
  {\endinnercustomgeneric}
}

\newcustomtheorem{customthm}{Theorem}
\newcustomtheorem{customlemma}{Lemma}
\newcustomtheorem{customcor}{Corollary}

\theoremstyle{definition}
\newtheorem*{definition}{Definition}
\newtheorem*{assumption}{Assumption}

\theoremstyle{remark}
\newtheorem*{remark}{Remark}

\renewcommand{\contentsname}{Appendix}

\maketitle

\addtocontents{toc}{\protect\setcounter{tocdepth}{0}}
\begin{abstract}
    Goal-Conditioned Reinforcement Learning (RL) problems often have access to sparse rewards where the agent receives a reward signal only when it has achieved the goal, making policy optimization a difficult problem.
  Several works augment this sparse reward with a learned dense reward function, but this can lead to sub-optimal policies if the reward is misaligned. 
  Moreover, recent works have demonstrated that effective shaping rewards for a particular problem can depend on the underlying learning algorithm. 
  This paper introduces a novel way to encourage exploration called
  $f$-Policy Gradients, or $f$-PG. $f$-PG minimizes the f-divergence between the agent's state visitation distribution and the goal, which we show can lead to an optimal policy. We derive gradients for various f-divergences to optimize this objective. Our learning paradigm provides dense learning signals for exploration in sparse reward settings. We further introduce an entropy-regularized policy optimization objective, that we call $state$-MaxEnt RL (or $s$-MaxEnt RL) as a special case of our objective. We show that several metric-based shaping rewards like L2 can be used with $s$-MaxEnt RL, providing a common ground to study such metric-based shaping rewards with efficient exploration. We find that $f$-PG has better performance compared to standard policy gradient methods on a challenging gridworld as well as the Point Maze and FetchReach environments. More information on our website \href{https://agarwalsiddhant10.github.io/projects/fpg.html}{https://agarwalsiddhant10.github.io/projects/fpg.html}.
\end{abstract}
\section{Introduction}

Reinforcement Learning (RL) algorithms aim to identify the optimal behavior (policy) for solving a task by interacting with the environment. The field of RL has made large strides in recent years \citep{dqn, alphago, sac, instructgpt, gt_sophy} and has been applied to complex tasks ranging from robotics \citep{relay_policy_learning}, protein synthesis \citep{alphafold}, computer architecture \citep{AlphaTensor2022} and finance \citep{finrl1}. 
Goal-Conditioned RL (GCRL) is a generalized form of the standard RL paradigm for learning a policy that can solve many tasks, as long as each task can be defined by a single rewarding goal state. 
Common examples of goal-conditioned tasks arise in robotics where the goal states can be a target object configuration for manipulation-based tasks \citep{kim2022robot, relay_policy_learning, gc_manipulation} or a target location for navigation-based tasks \citep{vinp, nav_to_obj}.

In any reinforcement learning setup, the task is conveyed to the agent using rewards \citep{rew_is_enough}.
In goal-conditioned RL settings, a common reward function used is $1$ when the goal is achieved and $0$ everywhere else. This reward function is sparse and poses a huge learning challenge to obtain the optimal policy without any intermediate learning signal.  
Prior works \citep{Ng1999PolicyIU, f-irl, aim, return_decomposition, prasoon} have augmented the reward function to provide some dense signal for policy optimization. 
A major issue with augmenting reward functions is that the optimal policy for the new reward function may no longer be optimal under the original, true reward function \citep{Ng1999PolicyIU}. Moreover, it has been shown \citep{reward_design} that shaping rewards that improve learning for one learning algorithm may not be optimal for another learning algorithm. Algorithms that learn reward functions \citep{f-irl, aim, learning_int_rew} are inefficient because the reward function must first be learned before it can be used for policy optimization. These challenges lead to the following research question: \emph{Is there another way to provide dense learning signals for policy optimization other than through dense shaping rewards?}

In this work, we look at using divergence minimization between the agent's state visitation and the goal distribution (we assume that each goal can be represented as a distribution, Dirac distribution being the simplest) as an objective to provide additional learning signals. Similar perspectives to policy learning has been explored by prior works \citep{conf/aaai/ZiebartMBD08, soft_q_learning, sac, gail, f-irl, f-max, airl}, but they reduce their methods into a reward-centric view. MaxEnt RL methods \citep{conf/aaai/ZiebartMBD08, soft_q_learning, sac} use the distribution over trajectories rather than state visitations and still suffer from sparsity if the task rewards are sparse.  
Imitation learning works like those of \cite{gail, airl, f-max} use a variational lower bound to obtain min-max objectives that require discriminators. These objectives suffer from mathematical instabilities and often require coverage assumptions i.e., abundant overlap between the agent's state visitation distribution and goal distribution.
Our method does not rely on discriminators nor does it assume state coverage.
It provides dense signals to update the policy even when the agent has not seen the goal. These signals push the policy towards higher entropy state visitations until the goal is discovered.

Our method, \textbf{$f$-PG or $f$-Policy Gradient}, introduces a novel GCRL framework that aims to minimize a general measure of mismatch (the $f$-divergence) between the agent's state visitation distribution and the goal distribution.
We prove that minimizing the $f$-divergence (for some divergences) recovers the optimal policy.  The analytical gradient for the objective looks very similar to a policy gradient which allows us to leverage established methods from the policy gradient literature to come up with an efficient algorithm for goal-conditioned RL.  We show the connection of our method to the commonly used metric-based shaping rewards for GCRL like L2 rewards. We show that a special case of $f$-PG jointly optimizes for maximization of a reward and the entropy of the state-visitation distribution thus introducing $\mathbf{state}$\textbf{-MaxEnt RL} (or \textbf{$\mathbf{s}$-MaxEnt RL}). Using a sparse gridworld, we establish the benefits of using $f$-PG as a dense signal to explore when the agent has not seen the goal. We also demonstrate that our framework can be extended to continuous state spaces and scale to larger and higher-dimensional state spaces in maze navigation and manipulation tasks.

Our key contributions are 1) developing a novel algorithm for goal-conditioned RL that provably produces the optimal policy, 2) connecting our framework to commonly known metric-based shaping rewards, 3) Providing a new perspective to RL ($s$-MaxEnt RL) that focuses on maximizing the entropy of the state-visitation distribution and 4) empirical evidence demonstrating its ability to provide dense learning signals and scale to larger domains.

\vspace{-10pt}
\section{Background}
\vspace{-5pt}

This section goes over the standard goal-conditioned reinforcement learning formulation and the f-divergences that will be used in the rest of the paper.

\textbf{Goal-conditioned reinforcement learning.}
This paper considers an agent in a goal-conditioned MDP \citep{puterman1990markov, kaelbling1993learning}.
A goal-conditioned MDP is defined as a tuple $\langle\mathcal{S}, \mathcal{G}, \mathcal{A}, P, r, \gamma, \mu_0, \rho_g\rangle$ where $\mathcal{S}$ is the state space, $\mathcal{A}$ is the action space, $P: \mathcal{S} \times \mathcal{A} \longmapsto \Delta(\mathcal{S})$ is the transition probability ($\Delta(\cdot)$ denotes a probability distribution over a set), $\gamma \in [0, 1)$ is the discount factor, $\mu_0$ is the distribution over initial states, 
$\mathcal{G} \subset \mathcal{S}$ is the set of goals, and $\rho_g: \Delta(\mathcal{G})$ is the distribution over goals.
At the beginning of an episode, the initial state $s_0$ and the goal $g$ are sampled from the distributions $\mu_0$ and $\rho_g$.
The rewards $r: \mathcal{S} \times \mathcal{G} \longmapsto \mathbb{R}$ are based on the state the agent visits and conditioned on the goal specified during that episode.
This work focuses on sparse rewards, where $r(s', g) = 1$ when $s' = g$, and is $r(s', g) = 0$ otherwise. In continuous domains, the equality is relaxed to $s' \in \mathcal{B}(g, r)$ where $\mathcal{B}(g, r)$ represents a ball around the goal $g$ with radius $r$. 

A trajectory $\tau$ is defined as the sequence $(s_0, a_0, s_1, \ldots, s_{T-1}, a_{T-1}, s_{T})$.
The return $H_g(s)$ is defined as the cumulative discounted rewards $H_g(s) := \sum_{t=0}^{T} \left[ \gamma^t r(s_{t+1}, g) | s_0=s\right]$, where $T$ is the length of a trajectory. We will assume the trajectory ends when a maximum number of policy steps ($T$) have been executed.
The agent aims to learn a policy $\pi: \mathcal{S} \times \mathcal{G} \longmapsto \Delta(\mathcal{A})$ that maximises the  expected return $\mathbb{E}_{\pi, s_0} [H_g(s_0)]$.
The optimal policy $\pi^* = \argmax_{\pi_{\theta} \in \Pi}  \mathbb{E}_{\pi, s_0} [H_g(s_0)]$, where the space of policies $\Pi$ is defined by a set of parameters $\theta \in \Theta$.

\textbf{Distribution matching approach to goal-conditioned RL.}
The distribution over goal-conditioned trajectories is defined as $p_\theta(\tau;g) = \Pi_{t=0}^T p(s_t|s_{t-1}, a_{t-1}) \pi_\theta(a_t|s_t;g)$.
The trajectory-dependent state visitation distribution is defined as $\eta_\tau(s)$.
It is the number of times the state $s$ is visited in the trajectory $\tau$.
The agent's goal-conditioned state visitation can then be defined as:
\begin{align}
    p_\theta(s;g) &=\frac{\int p_\theta(\tau;g) \eta_\tau(s) d \tau }{Z}\\
                &= \frac{\int \Pi p(s_{t+1}|s_t, a_t) \pi_\theta(a_t|s_t;g) \eta_\tau(s)}{\int\int \Pi p(s_{t+1}|s_t, a_t) \pi_\theta(a_t|s_t;g) \eta_\tau(s) d\tau ds}d\tau.
\end{align}

The goal $g$ defines an idealized target distribution $p_g: \Delta(\mathcal{S})$, considered here as a Dirac distribution which places all the probability mass at the goal state $p_g = \delta(g)$.
Such a formulation has been used previously in approaches to learn goal-conditioned policies \citep{aim}.
This work focuses on minimizing the mismatch of an agent's goal-conditioned state visitation distribution $p_{\theta}(s;g)$ to this target distribution $p_g$. In this paper, we will be using $p_\theta$ and $p_\pi$ interchangeably i.e., $p_\theta$ corresponds to the visitation distribution induced by policy $\pi$ that is parameterized by $\theta$.

To do so, this paper considers a family of methods that compare the state-visitation distribution induced by a goal-conditioned policy and the ideal target distribution for that goal $g$, called $f$-divergences.
$f$-divergences are defined as \citep{f-div},
\vspace{4pt}
\begin{equation}
    D_f(P||Q) = \int_{P>0} P(x) f\Big( \frac{Q(x)}{P(x)} \Big) dx - f'(\infty)Q[P(x) = 0]),
    \label{eqn:fdiv}
    \vspace{4pt}
\end{equation}
where $f$ is a convex function with $f(1) = 0$. $f'(\infty)$ is not defined (is $\infty$) for several $f$-divergences and so it is a common assumption that $Q = 0$ wherever $P = 0$. Table \ref{tab:f-div} shows a list of commonly used $f$-divergences with corresponding $f$ and $f'(\infty)$.

\begin{table}[h]
    \vspace{1mm}
    \centering
    \begin{tabular}{ccccc}
        \toprule
       $f$-divergence  & $D_f(P || Q)$ & $f(u)$ & $f'(u)$ & $f'(\infty)$\\
       \midrule
       \vspace{5pt}
        \textbf{FKL} & $\int P(x) \log \frac{P(x)}{Q(x)}dx$ &  $u\log u$ & $1 + \log{u}$ & Undefined   \\
        \vspace{5pt}
        \textbf{RKL} & $\int Q(x) \log \frac{Q(x)}{P(x)}dx$ &  $-\log{u}$ & $-\frac{1}u$ & $0$ \\
        \vspace{5pt}
        \textbf{JS}  & \scalebox{0.75}{\begin{tabular}{c}$\frac{1}{2}\int P(x) \log \frac{2P(x)}{P(x)+Q(x)} +$\\ $Q(x) \log \frac{2Q(x)}{P(x)+Q(x)}dx$\end{tabular}} &  \begin{tabular}{c}$u\log u -$\\$(1+u)\log \frac {1+u}2$\end{tabular} & $\log \frac{2u}{1+u}$ & $\log 2$  \\
        \textbf{$\chi^2$} & $\frac{1}{2} \int Q(x) (\frac{P(x)}{Q(x)} - 1)^2 dx$ &  $\frac{1}{2}(u - 1)^2$ & $u$ & Undefined  \\
      \bottomrule
    \end{tabular}
    \vspace{1mm}
    \caption{Selected list of $f$-divergences $D_f(P || Q)$ with generator functions $f$ and their derivatives $f'$, where $f$ is convex, lower-semicontinuous and $f(1)=0$.}
    \label{tab:f-div}
    \vspace{-1.2em}
\end{table}
\section{Related Work} 

\textbf{Shaping Rewards.}
Our work is related to a separate class of techniques that augment the sparse reward function with dense signals. \cite{Ng1999PolicyIU} proposes a way to augment reward functions without changing the optimal behavior. 
Intrinsic Motivation \citep{aim, count_int_mot, ssingh_int_mot, Barto2013} has been an active research area for providing shaping rewards. Some work \citep{Niekum_2010, learning_int_rew} learn intrinsic or alternate reward functions for the underlying task that aim to improve agent learning performance while others \citep{aim, f-irl, prasoon} learn augmented rewards based on distribution matching.
AIM~\citep{aim} learns a potential-based shaping reward to capture the time-step distance but requires a restrictive assumption about state coverage, especially around the goal while we do not make any such assumption. 
Recursive classification methods \citep{rec_class_examples, c-learning} use future state densities as rewards. However, these methods will fail when the agent has never seen the goal. 
Moreover, in most of these works, the reward is not stationary (is dependent on the policy) which can lead to instabilities during policy optimization. 
GoFAR \citep{gofar} is an offline goal-conditioned RL algorithm that minimizes a lower bound to the KL divergence between $p_\theta(s)$ and the $p_g(s)$. It computes rewards using a discriminator and uses the dual formulation utilized by the DICE family \citep{Nachum2019DualDICEBE}, but reduces to GAIL \citep{gail} in the online setting, requiring coverage assumptions. Our work also minimizes the divergence between the agent's visitation distribution and the goal distribution, but we provide a new formulation for on-policy goal-conditioned RL that does not require a discriminator or the same coverage assumptions.

\textbf{Policy Learning through State Matching.}
We first focus on imitation learning where the expert distribution $p_E(s, a)$ is directly inferred from the expert data.
GAIL \citep{gail} showed that the inverse RL objective is the dual of state-matching. f-MAX~\citep{f-max} uses f-divergence as a metric to match the agent's state-action visitation distribution $p_\pi(s, a)$ and $p_E(s, a)$. \cite{imitation_as_f_div, f-max} shows how several commonly used imitation learning methods can be reduced to a divergence minimization. But all of these methods optimize a lower bound of the divergence which is essentially a min-max bilevel optimization objective. They break the min-max into two parts, fitting the density model to obtain a reward that can be used for policy optimization. But these rewards depend on the policy, and should not be used by RL algorithms that assume stationary rewards. f-IRL~\citep{f-irl} escapes the min-max objective but learns a reward function that can be used for policy optimization. \emph{We do not aim to learn a reward function but rather directly optimize for a policy using dense signals from an $f$-divergence objective.}

In reinforcement learning, the connections between entropy regularized MaxEnt RL and the minimization of reverse KL between agent's trajectory distribution, $p_\pi(\tau)$, and the ``optimal" trajectory distribution, $p^*(\tau) \propto e^{r(\tau)}$ has been extensively studied~\cite{Ziebart2010ModelingPA, conf/aaai/ZiebartMBD08, Kappen_2012, sl_probinf, sac}. MaxEnt RL optimizes for a policy with maximum entropy but such a policy does not guarantee maximum coverage of the state space. \cite{max_exploration} discusses an objective for maximum exploration that focuses on maximizing the entropy of the state-visitation distribution or KL divergence between the state-visitation distribution and a uniform distribution.
A few works like \cite{durugkar2023estimation, aim, gofar}, that have explored state-matching for reinforcement learning, have been discussed above.

\textbf{Limitations of Markov Rewards.}
Our work looks beyond the maximization of a Markov reward for policy optimization. The learning signals that we use are non-stationary. We thus discuss the limitations of using Markov rewards for obtaining the optimal policy. There have been works ~\citep{expressibility, faulty_reward, rew-mac, rew-mac2} that express the difficulty in using Markov rewards. \cite{expressibility} proves that there always exist environment-task pairs that cannot be described using Markov rewards. Reward Machines \citep{rew-mac} create finite automata to specify reward functions and can specify Non-Markov rewards as well but these are hand-crafted. 
\vspace{-9pt}
\section{$f$-Policy Gradient}
\vspace{-5pt}
\label{sec:gradient}
In this paper, we derive an algorithm where the agents learn by minimizing the following $f$-divergence:
\vspace{2pt}
\begin{equation}
    J(\theta) = D_f(p_\theta(s) || p_g(s))
    \label{eqn:obj}
\end{equation}

In this section, we shall derive an algorithm to minimize $J(\theta)$ and analyze the objective more closely in the subsequent section. Unlike f-max \citep{f-max}, we directly optimize $J(\theta)$.
We differentiate $J(\theta)$ with respect to $\theta$ to get this gradient.

\begin{theorem}
\label{grad}
The gradient of $J(\theta)$ as defined in Equation \ref{eqn:obj} is given by,
\begin{equation}
     \nabla_\theta J(\theta) = \mathbbm{E}_{\tau \sim p_\theta(\tau)} \Bigg[ \Big[ \sum_{t=1}^T \nabla_\theta \log \pi_\theta(a_t|s_t) \Big]
     \Big[ \sum_{t=1}^T f' \Big(\frac{p_\theta(s_t)}{p_g(s_t)}\Big)\Big]\Bigg].
     \label{eqn:grad}
\end{equation}
\end{theorem}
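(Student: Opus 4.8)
The plan is to view $J(\theta)$ as an integral over the state space, push the gradient inside, and convert the resulting $\nabla_\theta p_\theta(s)$ into a trajectory expectation using the score-function (log-derivative) identity. I would first write the objective in the form consistent with the generator table, $J(\theta) = \int p_g(s)\, f\!\big(p_\theta(s)/p_g(s)\big)\, ds$, i.e. the standard Csisz\'ar convention $D_f(P\|Q)=\int Q\, f(P/Q)$ under which the entries of Table~\ref{tab:f-div} are recovered. This is the important bookkeeping choice: it is precisely what makes the ratio inside $f'$ come out as $p_\theta/p_g$ rather than its reciprocal, since the literal reading $\int P f(Q/P)$ would instead produce the conjugate generator $\tilde f(u)=u f(1/u)$ and an extra term.

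Since $p_g$ is independent of $\theta$, differentiating under the integral sign and applying the chain rule collapses everything to $\nabla_\theta J(\theta) = \int f'\!\big(p_\theta(s)/p_g(s)\big)\, \nabla_\theta p_\theta(s)\, ds$, because the explicit $p_g(s)$ prefactor cancels the $1/p_g(s)$ produced when differentiating the argument of $f$. The only object still carrying $\theta$-dependence is $\nabla_\theta p_\theta(s)$.

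Next I would expand $\nabla_\theta p_\theta(s)$ using $p_\theta(s) = \tfrac1Z\int p_\theta(\tau)\,\eta_\tau(s)\, d\tau$. The normalizer $Z=\int\!\!\int p_\theta(\tau)\eta_\tau(s)\,d\tau\,ds$ integrates the visitation counts over all states, which for a fixed horizon equals $T$ (up to indexing) and is therefore a $\theta$-independent constant, so no quotient-rule term arises. Applying $\nabla_\theta p_\theta(\tau)=p_\theta(\tau)\nabla_\theta\log p_\theta(\tau)$ and using that the transition kernels $p(s_{t+1}|s_t,a_t)$ do not depend on $\theta$ gives $\nabla_\theta\log p_\theta(\tau)=\sum_t \nabla_\theta\log\pi_\theta(a_t|s_t)$. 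Substituting back, interchanging the order of the $s$- and $\tau$-integrals, and using $\eta_\tau(s)=\sum_t\delta(s-s_t)$ to turn $\int f'(p_\theta(s)/p_g(s))\eta_\tau(s)\,ds$ into $\sum_t f'(p_\theta(s_t)/p_g(s_t))$, I would recognize the remaining integral against $p_\theta(\tau)$ as the stated expectation.

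The main obstacle is the careful handling of $\nabla_\theta p_\theta(s)$: I must justify differentiating under both integral signs (dominated convergence, which needs mild regularity of $\pi_\theta$ in $\theta$) and, crucially, argue that $Z$ is genuinely constant in $\theta$, which is what lets the formula avoid a second, baseline-like term. A related subtlety is that the clean statement drops the overall positive factor $1/Z$; since it only rescales the gradient it is immaterial for gradient descent (absorbable into the step size), but it should be acknowledged. Finally, because $p_g$ is taken to be a Dirac, the ratio $p_\theta/p_g$ and hence $f'(p_\theta/p_g)$ is singular, so the derivation is really carried out for a target density $p_g$ with full support, with the Dirac case understood as a limit (or via the ball-relaxation used for the sparse reward).
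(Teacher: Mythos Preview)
Your proposal is correct and follows the same route as the paper: write $J(\theta)=\int p_g\,f(p_\theta/p_g)$, differentiate to $\int f'(p_\theta/p_g)\,\nabla_\theta p_\theta$, expand $\nabla_\theta p_\theta(s)$ through the trajectory integral with the score-function identity, and collapse the state integral against $\eta_\tau$. The one difference worth flagging is the normalizer. You observe upfront that $Z=\int\!\!\int p_\theta(\tau)\,\eta_\tau(s)\,ds\,d\tau=T\int p_\theta(\tau)\,d\tau=T$ is $\theta$-independent and therefore skip the quotient rule entirely. The paper instead differentiates $Z$ explicitly (treating the policy values as unconstrained intermediate variables), carries a second term of the form $-T\,\mathbb{E}_{p_\theta}[f'(\cdot)]\,\mathbb{E}_\tau\big[\sum_t\nabla_\theta\log\pi_\theta(a_t\mid s_t)\big]$ all the way through, and only drops it at the last line by invoking that the expected score is zero. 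Your shortcut is cleaner and mathematically equivalent; the paper's longer path has the side benefit of exhibiting the baseline structure explicitly. Your remarks on the suppressed $1/T$ prefactor (the appendix restatement keeps it), the $D_f$ convention needed to land on $f'(p_\theta/p_g)$ rather than its reciprocal, and the Dirac-target caveat are all accurate and match what the paper does.
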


The gradient looks exactly like policy gradient with rewards $-f' \Big(\frac{p_\theta(s_t)}{p_g(s_t)}\Big)$. However, this does not mean that we are maximizing $J^{RL}(\theta) = \mathbbm{E}_{\tau \sim p_\theta(\tau)}\Big[ -f' \Big(\frac{p_\theta(s_t)}{p_g(s_t)}\Big) \Big]$. This is because the gradient of $J^{RL}(\theta)$ is not the same as $\nabla_\theta J(\theta)$.  
For Dirac goal distributions, the gradient in Equation \ref{eqn:grad} cannot be used (as $f'\Big(\frac{p_\theta(s_t)}{p_g(s_t)} \Big)$ will not be defined when $p_g(s_t) = 0$). We can use the definition of $f$-divergence in Equation \ref{eqn:fdiv} to derive a gradient for such distributions. 

The gradient is obtained in terms of the state visitation frequencies $\eta_\tau(s)$. Further examination of the gradient leads to the following theorem,

\begin{theorem}
    \label{grad_goal}
    Updating the policy using the gradient (Equation \ref{eqn:grad}) maximizes $\mathbbm{E}_{p_\theta}[\eta_\tau(g)]$.
\end{theorem}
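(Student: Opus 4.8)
The plan is to show that the Dirac-adapted gradient of $J$ is a nonpositive scalar multiple of $\nabla_\theta \mathbb{E}_{p_\theta}[\eta_\tau(g)]$, so that a descent step on $J$ is exactly an ascent step on the expected goal-visitation count. I would first record the identity linking the two quantities. From the definition of the state-visitation distribution, $p_\theta(s) = \frac{1}{Z}\int p_\theta(\tau)\eta_\tau(s)\,d\tau$ with $Z = \int p_\theta(\tau)\big(\int \eta_\tau(s)\,ds\big)\,d\tau$. Because every trajectory has the fixed horizon $T$, the inner integral is the constant trajectory length, so $Z$ is independent of $\theta$ and $\mathbb{E}_{p_\theta}[\eta_\tau(g)] = Z\,p_\theta(g)$. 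Hence it suffices to argue that the update increases $p_\theta(g)$.

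Next I would evaluate $J$ for the Dirac target $p_g = \delta(g)$ directly from Equation \ref{eqn:fdiv} with $P = p_\theta$ and $Q = p_g$. On the set $\{p_\theta>0\}$ the ratio $p_g/p_\theta$ vanishes at every state except $g$, where it equals $1/p_\theta(g)$, and the residual term $f'(\infty)\,p_g[p_\theta=0]$ is $0$ whenever $p_\theta(g)>0$. This collapses the divergence into a function of the single scalar $q := p_\theta(g)$,
\[
J(\theta) = \Phi(q) := q\,f\!\left(\tfrac{1}{q}\right) + (1-q)\,f(0),
\]
since all non-goal mass enters only through its total $1-q$ with coefficient $f(0)$. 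Differentiating, $\Phi'(q) = f(1/q) - \tfrac{1}{q}f'(1/q) - f(0)$. Writing $h(u) = f(u) - u\,f'(u)$, convexity of $f$ gives $h'(u) = -u\,f''(u) \le 0$, so $h$ is nonincreasing and $\Phi'(q) = h(1/q) - h(0) \le 0$ for every $q \in (0,1]$.

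Finally, by the chain rule $\nabla_\theta J = \Phi'(q)\,\nabla_\theta p_\theta(g)$, and the log-derivative identity gives the state-visitation-frequency form promised in the text,
\[
\nabla_\theta p_\theta(g) = \frac{1}{Z}\,\mathbb{E}_{\tau \sim p_\theta(\tau)}\!\left[\eta_\tau(g)\sum_{t} \nabla_\theta\log\pi_\theta(a_t|s_t)\right].
\]
Combining, $\nabla_\theta J = \frac{\Phi'(q)}{Z}\,\nabla_\theta\mathbb{E}_{p_\theta}[\eta_\tau(g)]$ with $\Phi'(q)/Z \le 0$, so the minimizing update $-\alpha\nabla_\theta J$ points along $\nabla_\theta\mathbb{E}_{p_\theta}[\eta_\tau(g)]$ and therefore performs ascent on the expected goal visitation, which is the claim.

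The delicate point, and the main obstacle, is the rigorous treatment of the Dirac target: justifying that $D_f(p_\theta\|\delta(g))$ genuinely collapses to $\Phi(q)$ — that the atom at $g$ contributes $q\,f(1/q)$, that the non-goal mass contributes only through $1-q$, and that the $f'(\infty)$ correction is inactive on the region $\{p_\theta(g)>0\}$ relevant during optimization. Once that reduction is secured (cleanest to argue in the discrete/atomic picture and then lift via Equation \ref{eqn:fdiv}), the monotonicity $\Phi'\le0$ follows purely from convexity and the remainder is the standard policy-gradient computation.
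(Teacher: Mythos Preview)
Your argument is essentially the paper's: collapse $J$ to a scalar function of $q=p_\theta(g)$, show it is nonincreasing in $q$ via convexity of $f$, and then chain-rule through $\nabla_\theta p_\theta(g)=\tfrac{1}{Z}\nabla_\theta\mathbb{E}_{p_\theta}[\eta_\tau(g)]$ to conclude that a descent step on $J$ is an ascent step on the expected goal-visitation count. The one discrepancy is notational rather than substantive. You read Equation~\ref{eqn:fdiv} literally and obtain $\Phi(q)=q\,f(1/q)+(1-q)f(0)$ with $\Phi'(q)=h(1/q)-h(0)\le 0$ for $h(u)=f(u)-uf'(u)$; the paper's appendix proof instead uses the convention consistent with Table~\ref{tab:f-div}, namely $D_f(p_\theta\|p_g)=\sum_{p_g>0}p_g\,f(p_\theta/p_g)+f'(\infty)\,p_\theta[p_g{=}0]$, which yields the reduction $\Phi(q)=f(q)+f'(\infty)(1-q)$ and the one-line monotonicity $\Phi'(q)=f'(q)-f'(\infty)\le 0$. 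The two forms are Csisz\'ar duals of each other, so both computations are valid; your version trades the paper's standing hypothesis that $f'(\infty)$ is finite for the requirement that $f(0)$ be finite.
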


Theorem \ref{grad_goal} provides another perspective for $f$-Policy Gradient -- $\eta_\tau(g)$ is equivalent to the expected return for a goal-based sparse reward, hence optimizing the true goal-conditioned RL objective. We shall prove the optimality of the policy obtained from minimizing $J(\theta)$ in the next section. 

In practice, a Dirac goal distribution can be approximated by clipping off the zero probabilities at $\epsilon$, similar to Laplace correction. Doing so, we will be able to use dense signals from the gradient in Equation \ref{eqn:grad} while still producing the optimal policy. This approximation is different from simply adding an $\epsilon$ reward at every state. This is because the gradients are still weighed by $f'\Big(\frac{p_\theta(s_t)}{\epsilon} \Big)$ which depends on $p_\theta(s_t)$. 

Simply optimizing $J(\theta)$ is difficult because it faces similar issues to REINFORCE \citep{reinforce}. A major shortcoming of the above gradient computation is that it requires completely on-policy updates. This requirement will make learning sample inefficient, especially when dealing with any complex environments.
However, there have been a number of improvements to na\"ive policy gradients that can be used.
One approach is to use importance sampling \citep{precup2000eligibility}, allowing samples collected from a previous policy $\pi_{\theta'}$ to be used for learning.
To reap the benefits of importance sampling, we need the previous state-visitation distributions to compute$f'\Big(\frac{p_{\theta}(s)}{p_g(s)} \Big)$. Hence, we need to ensure that the current policy does not diverge much from the previous policy.
This condition is ensured by constraining the KL divergence between the current policy and the previous policy.
We use the clipped objective similar to Proximal Policy Optimization \citep{ppo}, which has been shown to work well with policy gradients.
PPO has shown that the clipped loss works well even without an explicit KL constraint in the objective.
The gradient used in practice is,
\vspace{4pt}
\begin{equation}
    \nabla_\theta J(\theta) = \mathbbm{E}_{s_t,a_t \sim p_{\theta'}(s_t, a_t)} \Big[ \min(r_\theta(s_t) F_{\theta'}(s_t), clip(r_\theta(s_t), 1-\eps, 1+\eps) F_{\theta'}(s_t)) \Big]
    \vspace{4pt}
    \label{eqn:final}
\end{equation}

\begin{wrapfigure}[16]{r}{0.5\textwidth}
\begin{minipage}{0.5\textwidth}
\vspace{-25pt}
\begin{algorithm}[H]
\caption{$f$-PG} \label{algo}
\begin{algorithmic}
    \STATE Let, $\pi_\theta$ be the policy, $G$ be the set of goals, $B$ be a buffer
    \FOR{$i=1$ \TO num\_iter}
        \STATE $B \gets [ ]$
        \FOR{$j=1$ \TO num\_traj\_per\_iter}
            \STATE Sample $g$, set $p_g(s)$
            \STATE Collect goal conditioned trajectories, $\tau:g$
            \STATE Fit $p_\theta(s)$ using KDE on $\tau$
            \STATE Store $f'\Big( \frac{p_\theta(s)}{p_g(s)}\Big)$ for each $s$ in $\tau$
            \STATE $B \gets B + \{ \tau : g\}$
        \ENDFOR
        \FOR{$j = 1$ \TO num\_policy\_updates}
            \STATE $\theta \gets \theta - \alpha \nabla_\theta J(\theta)$ (Equation \ref{eqn:final})
        \ENDFOR
    \ENDFOR
\end{algorithmic}
\end{algorithm}
\end{minipage}
\end{wrapfigure}

where  $r_\theta(t) = \frac{\pi_\theta(a_t|s_t)}{\pi_{\theta'}(a_t|s_t)}$ and $F_{\theta'}(s_t) = \sum_{t'=t}^T \gamma^{t'} f' \Big(\frac{p_{\theta'}(s_t)}{p_g(s_t)}\Big)$. The derivation for this objective is provided in Appendix \ref{grad_opt_appendix}. $\gamma$ is added to improve the stability of gradients and to prevent the sum of $f' \Big(\frac{p_{\theta'}(s_t)}{p_g(s_t)}\Big)$ from exploding.

For the purpose of this paper, we use kernel density estimators to estimate the goal distribution and the agent's state visitation distribution. We may also use discriminators to estimate the ratio of these densities like \cite{gail, airl, f-max}. But unlike these methods, we will not be incorrectly breaking a minmax objective. In our case, the estimate of the gradient requires the value of the ratio of the two distributions and does not make any assumptions about the stationarity of these values. While the adversarial methods break the minmax objective and assume the discriminator to be fixed (and rewards stationary) during policy optimization.   

\vspace{-10pt}
\section{Theoretical analysis of $f$-PG}
\vspace{-10pt}
In this section, we will first show that minimizing the f-divergence between the agent's state visitation distribution and goal distribution yields the optimal policy. We will further analyze the connections to metric based shaping rewards and implicit exploration boost from the learning signals. For the rest of the paper, we will refer to $f$-PG using FKL divergence as $fkl$-PG, $f$-PG using RKL divergence as $rkl$-PG and so on.

\subsection{Analysis of $J(\theta)$}
\label{sec:analysis}
This section shows that the policy obtained by minimizing an $f$-divergence between the agent's state visitation distribution and the goal distribution is the optimal policy. 

\begin{theorem}
    \label{opt}
    The policy that minimizes $D_f(p_\pi || p_g)$ for a convex function $f$ with $f(1) = 0$ and $f'(\infty)$ being defined, is the optimal policy.
\end{theorem}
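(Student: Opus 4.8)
The plan is to reduce the functional minimization over policies to a one-dimensional problem in the mass that the agent's state-visitation distribution places on the goal, and then exploit convexity of $f$. Write $\alpha := p_\pi(\{g\})$ for the probability that $p_\pi$ assigns to the goal state; since the total trajectory length (hence the normalizer $Z$) is fixed, $\alpha$ is proportional to $\mathbb{E}_{p_\theta}[\eta_\tau(g)]$, the expected number of goal visits, which by Theorem \ref{grad_goal} is exactly the sparse-reward return that the optimal policy maximizes. So it suffices to show that minimizing $D_f(p_\pi\|p_g)$ is the same as maximizing $\alpha$.

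First I would evaluate the divergence on a Dirac target. Because $p_g=\delta(g)$ is supported only on $\{g\}$, the agent's distribution $p_\pi$ is singular with respect to $p_g$ off the goal, so the general definition of the $f$-divergence --- the one carrying the asymptotic-slope term $f'(\infty)$ for the singular part --- must be used rather than a naive density ratio. Taking the Lebesgue decomposition of $p_\pi$ with respect to $p_g$, the absolutely continuous part contributes $f(\alpha)$ at the single atom $g$ and the singular part (mass $1-\alpha$) contributes $f'(\infty)(1-\alpha)$, giving
\begin{equation}
    D_f(p_\pi\|p_g) = f(\alpha) + f'(\infty)\,(1-\alpha) =: G(\alpha).
\end{equation}
This is the step where the hypothesis that $f'(\infty)$ is defined (finite) is essential: for generators such as FKL or $\chi^2$ the singular term is infinite and the reduction breaks, which is exactly why the statement excludes them.

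Next I would show $G$ is non-increasing on $[0,1]$. Differentiating, $G'(\alpha)=f'(\alpha)-f'(\infty)$, and since $f$ is convex its derivative is non-decreasing with supremal value $f'(\infty)=\lim_{u\to\infty}f(u)/u$; hence $f'(\alpha)\le f'(\infty)$ and $G'(\alpha)\le 0$ for every $\alpha$. Therefore $D_f(p_\pi\|p_g)$ is minimized precisely when $\alpha$ is as large as the set of achievable occupancy measures permits. Combining this with the first paragraph, the minimizer of the $f$-divergence is the policy that maximizes the expected goal-visitation (equivalently the sparse return), i.e.\ the optimal policy; as a sanity check $G(1)=f(1)=0$, recovering $D_f=0$ when $p_\pi=p_g$.

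I expect the main obstacle to be the careful treatment of the Dirac target: making the Lebesgue decomposition and the $f'(\infty)$ singular term rigorous (rather than formally plugging a density into Equation \ref{eqn:fdiv}), and justifying that the scalar reduction to $G(\alpha)$ loses nothing --- in particular that the only feature of $p_\pi$ the divergence sees is the goal-mass $\alpha$. The convexity/monotonicity argument and the identification of $\alpha$ with the return are comparatively routine.
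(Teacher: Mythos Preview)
Your proposal is correct and follows essentially the same route as the paper: reduce $D_f(p_\pi\|p_g)$ for the Dirac target to the scalar function $G(\alpha)=f(\alpha)+f'(\infty)(1-\alpha)$, use convexity of $f$ to get $G'(\alpha)=f'(\alpha)-f'(\infty)\le 0$, and then identify maximizing $\alpha=p_\pi(g)$ with maximizing the sparse return. The only quibble is that Theorem~\ref{grad_goal} concerns what the gradient update does rather than the identification of $\alpha$ with the return; the paper handles that step by a direct one-line computation $R_\pi=\mathbb{E}_{p_\pi}[r(s)]=p_\pi(g)$, which you could substitute in place of the reference.
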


 The proof for Theorem \ref{opt} is provided in Appendix \ref{analysis_appendix}. The Theorem states that the policy obtained by minimizing the $f$-divergence between the agent's state-visitation distribution and the goal distribution is the optimal policy for a class of convex functions defining the $f$-divergence with $f'(\infty)$ defined. It thus makes sense to minimize the $f$-divergence between the agent's visitation and the goal distribution. It must be noted that the objective does not involve maximizing a reward function. Note that the condition that $f'(\infty)$ is defined is not true for all $f$-divergences. The common $f$-divergences like RKL, TV, and JS have $f'(\infty)$ defined $rkl$-PG, $tv$-PG, and $js$-PG will produce the optimal policy.

Forward KL divergence (FKL) has $f = u \log{u}$ and so does not have $f'(\infty)$ defined. Does this mean that the policy obtained by minimizing the FKL divergence is not optimal?  Lemma \ref{lemma:fkl_div} (proof in Appendix \ref{analysis_appendix}) shows that the policy obtained maximizes the entropy of the agent's state-visitation distribution along with maximizing a reward of $\log{p_g(s)}$. 

\begin{lemma}
\label{lemma:fkl_div}
    $fkl$-PG produces a policy that maximizes both the reward $\log{p_g(s)}$ and the entropy of the state-visitation distribution.
\end{lemma}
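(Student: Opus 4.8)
The plan is to show that the $fkl$-PG objective decomposes exactly into a cross-entropy term and an entropy term, so that its minimization coincides with an entropy-regularized reward-maximization problem. No appeal to Theorem \ref{opt} is needed here (indeed it cannot be used, since FKL has $f'(\infty)$ undefined, which is precisely why FKL warrants a separate treatment).

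First I would instantiate the objective with the FKL generator $f(u)=u\log u$ from Table \ref{tab:f-div}, writing
\[
    J(\theta) = D_f(p_\theta(s)\,\|\,p_g(s)) = \int p_\theta(s)\,\log\frac{p_\theta(s)}{p_g(s)}\,ds .
\]
The single algebraic step is to split the logarithm, $\log\frac{p_\theta(s)}{p_g(s)} = \log p_\theta(s) - \log p_g(s)$, and distribute the integral to obtain
\[
    J(\theta) = \int p_\theta(s)\log p_\theta(s)\,ds \;-\; \int p_\theta(s)\log p_g(s)\,ds
    = -\,\mathcal{H}(p_\theta) \;-\; \mathbb{E}_{s\sim p_\theta}\!\big[\log p_g(s)\big],
\]
where $\mathcal{H}(p_\theta) := -\int p_\theta(s)\log p_\theta(s)\,ds$ is the entropy of the agent's state-visitation distribution. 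Negating, minimizing $J(\theta)$ over $\theta$ is equivalent to
\[
    \max_{\theta}\ \Big[\ \mathbb{E}_{s\sim p_\theta}\!\big[\log p_g(s)\big] \;+\; \mathcal{H}(p_\theta)\ \Big],
\]
which is exactly the joint maximization of the expected reward $r(s)=\log p_g(s)$ and the state-visitation entropy, establishing the claim and identifying it as the $s$-MaxEnt RL objective.

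The subtlety I expect to matter is interpretational rather than computational. The phrase ``maximizes both'' should be read as optimizing the single regularized objective above --- a trade-off between reward and entropy --- rather than maximizing each term independently; this is the same sense in which standard MaxEnt RL ``maximizes reward and entropy,'' but with the crucial difference that the entropy here is that of the \emph{state-visitation} distribution $p_\theta(s)$ rather than of the policy or trajectory distribution, which is the conceptual point distinguishing $s$-MaxEnt RL. I would also flag well-definedness: for a Dirac $p_g$ both $\log p_g(s)$ and $\mathcal{H}(p_\theta)$ can be ill-behaved, so the statement is understood with the $\epsilon$-clipped (Laplace-corrected) goal distribution described after Theorem \ref{grad_goal}, under which $\log p_g(s)$ and the integrals are finite and the decomposition goes through verbatim.
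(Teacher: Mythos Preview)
Your proposal is correct and follows essentially the same route as the paper: instantiate the FKL generator, expand the divergence as $\mathbb{E}_{p_\theta}[\log(p_\theta/p_g)]$, split the logarithm into $-\mathcal{H}(p_\theta)-\mathbb{E}_{p_\theta}[\log p_g]$, and conclude that minimizing $J(\theta)$ is the same as maximizing expected reward $\log p_g(s)$ plus state-visitation entropy. Your added remarks on the ``maximizes both'' reading and on the $\epsilon$-clipped goal distribution are sensible clarifications that the paper's proof omits but are consistent with its surrounding discussion.
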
 

A similar result can be shown for $\chi^2$-divergence as well. It must be understood that Lemma \ref{lemma:fkl_div} does not mean that $fkl$-PG is the same as the commonly studied MaxEnt RL. 

\textbf{Differences from MaxEnt RL}:
MaxEnt RL, as studied in \cite{soft_q_learning, sac}, maximizes the entropy of the policy along with the task reward to achieve better exploration. However, maximizing the entropy of the policy does not imply maximum exploration. \cite{max_exploration} shows that maximizing the entropy of the state-visitation distribution provably provides maximum exploration. Lemma \ref{lemma:fkl_div} shows that $fkl$-PG maximizes the entropy of the state-visitation distribution along with the reward making it better suited for exploration. To distinguish our work, we call the MaxEnt RL, as discussed in works like \cite{soft_q_learning, sac}, as \textbf{$\mathbf{\pi}$-MaxEnt RL} because it only focuses on the entropy of the policy. On the other hand, $fkl$-PG  maximizes the entropy of the state-visitation distribution so we call it \textbf{$\mathbf{state}$-MaxEnt RL} or \textbf{$\mathbf{s}$-MaxEnt RL}. Similarly, \textbf{$\mathbf{sa}$-MaxEnt RL} can be defined to maximize the entropy of the state-action visitation distribution.

\begin{figure}[h]
    \centering
    \begin{subfigure}{\textwidth}
        \centering
        \includegraphics[width=0.9\textwidth]{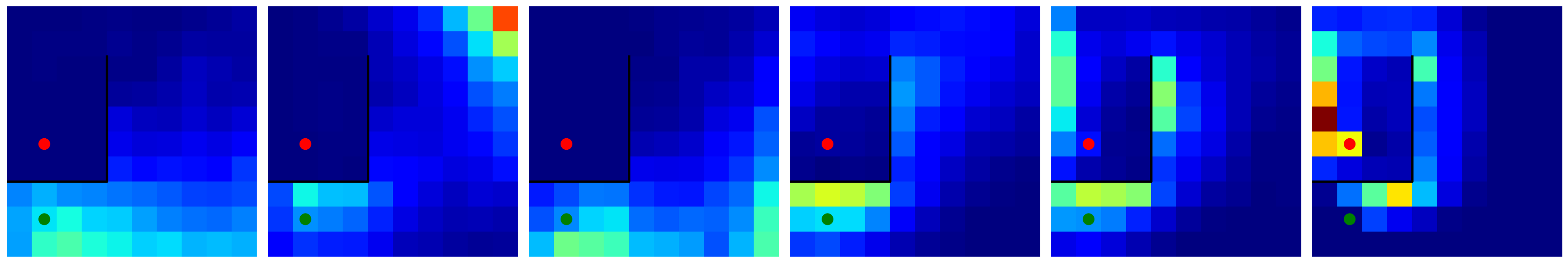}
        \caption{$s$-MaxEnt RL}
    \end{subfigure}
    \begin{subfigure}{\textwidth}
        \centering
        \includegraphics[width=0.9\textwidth]{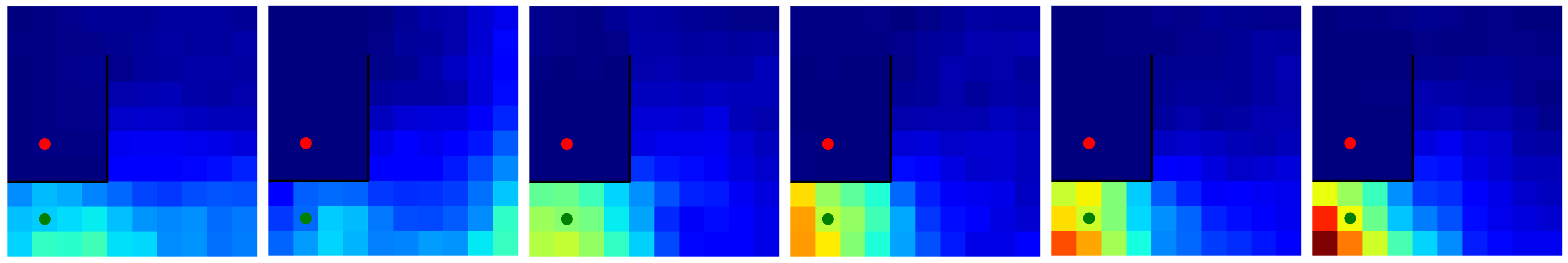}
        \caption{$\pi$-MaxEnt RL}
    \end{subfigure}
    \caption{\small{Comparison of the evolution state-visitation distributions with training for $\pi$-MaxEnt RL and $s$-MaxEnt RL. The darker regions imply lower visitation while the bright regions imply higher visitations.}}
    \vspace{-5pt}
    \label{fig:maxent}
\end{figure}

Since the agent's state visitation distribution depends on both the policy and the dynamics, simply increasing the entropy of the policy (without considering the dynamics) will not ensure that the agent will visit most of the states or will have a state-visitation distribution with high entropy. In Figure \ref{fig:maxent}, we compare the efficiencies of $\pi$-MaxEnt RL and $s$-MaxEnt RL to explore around a wall in a discrete gridworld. The initial and the goal distributions ( highlighted in green and red respectively) are separated by a wall. This environment is further discussed in Section \ref{sec:gridworld} and Appendix \ref{gridworld_appendix}. Figure \ref{fig:maxent} shows the evolution of the agent's state-visitation distribution with training for $s$-MaxEnt RL ($fkl$-PG) and $\pi$-MaxEnt RL (Soft Q Learning \citep{soft_q_learning})

\textbf{Metric-based Shaping Reward:} 
A deeper look into Lemma \ref{lemma:fkl_div} shows that an appropriate choice of $p_g(s)$ can lead to entropy maximizing policy optimization with metric-based shaping rewards. Define the goal distribution
as $p_g(s) = e^{f(s;g)}$ where ${f(s;g)}$ captures the metric of the underlying space. Then the $fkl$-PG objective becomes,
\begin{equation}
    \min D_{FKL}(p_\theta, p_g) = \max \mathbbm{E}_{p_\theta}[f(s;g)] -  \mathbbm{E}_{p_\theta}[\log{p_\theta}].
\end{equation}
The above objective maximizes the reward $f(s;g)$ along with the entropy of the agent's state visitation distribution. For an L2 Euclidean metric, $f(s;g)$ will be $-||s - g||^2_2$ which is the L2 shaping reward, and the goal distribution will be Gaussian. If the goal distribution is Laplacian, the corresponding shaping reward will be the L1 norm.

AIM \citep{aim} used a potential-based shaping reward based on a time step quasimetric. If we define $f(s;g)$ as a Lipschitz function for the time step metric maximizing at $s=g$, we will end up optimizing for the AIM reward along with maximizing the entropy of the state-visitation distribution.  

\vspace{-5pt}
\subsection{Analysis of the learning signals}
\vspace{-5pt}

$f$-PG involves a learning signal $f'(\frac{p_\theta(s)}{p_g(s)})$ to weigh the log probabilities of the policy. It is thus important to understand how $f'(\frac{p_\theta(s)}{p_g(s)})$ behaves for goal-conditioned RL settings. During the initial stages of training, the agent visits regions with very low $p_g$. For such states, the signal has a lower value than the states that have lower $p_\theta$, i.e., the unexplored states. This is because for any convex function $f$, $f'(x)$ is an increasing function, so minimizing $f'(\frac{p_\theta(s)}{p_g(s)})$ (recall that we are minimizing $f$-divergence) will imply minimizing $p_\theta(s)$ for the states with low $p_g(s)$. The only way to do this is to increase the entropy of the state-visitation distribution, directly making the agent explore new states. As long as there is no significant overlap between the two distributions, it will push $p_\theta$ down to a flatter distribution until there is enough overlap with the goal distribution when it will pull back the agent's visitation again to be closer to the goal distribution.

This learning signal should not be confused with reward in reinforcement learning. It is non-stationary and non-Markovian as it depends on the policy. More importantly, we are not maximizing this signal, just using it to weigh the gradients of the policy.

In the following example,  we shall use the Reacher environment \citep{todorov2012mujoco} to illustrate how our learning signal $(f'(\frac{p_\theta(s)}{p_g(s)}))$ varies as the agent learns.
We will also show how this signal can push for exploration when the agent has not seen the goal yet. Consider the  
We fix the goal at (-0.21, 0) and show how the learning signal evolves with the policy. While Figure \ref{fig:reward} shows the evolution of the learning signal for $fkl$-PG, the rest can be found in Appendix \ref{viz_opt}.

\begin{figure}
    \centering
    \includegraphics[width=.8\textwidth]{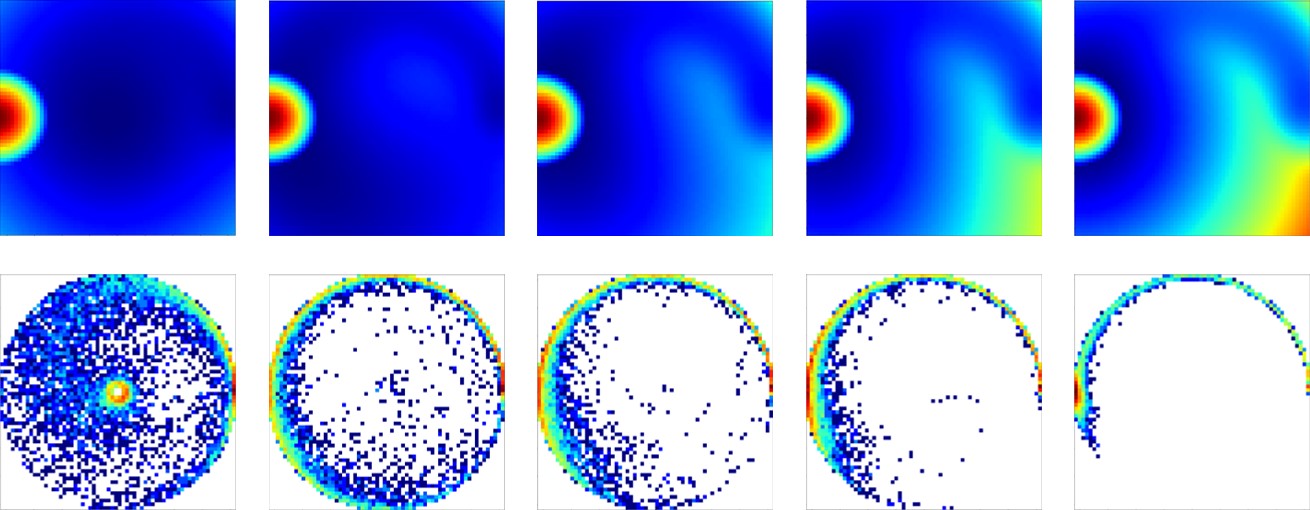}
    \caption{\small{Evolution of $f'(\frac{p_\theta(s)}{p_g(s)})$ for $f=u\log{u}$ through policy learning. Top: $f'(\frac{p_\theta(s)}{p_g(s)}$, darker blue are relatively low values while red corresponds to higher values. Bottom: Corresponding state-visitation of the policy.}}
    \vspace{-15pt}
    \label{fig:reward}
\end{figure}

The value of $f'(\frac{p_\theta(s)}{p_g(s)})$ is lowest where the agent's visitation is high and higher where the agent is not visiting. $f'(\frac{p_\theta(s)}{p_g(s)})$ has the highest value at the goal. As the policy converges to the optimal policy, the regions where the state-visitation distribution is considerably low (towards the bottom-right in the figure), the value for $f'(\frac{p_\theta(s)}{p_g(s)})$ increases for those states (to still push for exploration) but its value at the goal is high enough for the policy to converge.

\section{Experiments}

Our experiments evaluate our new framework ($f$-PG) as an alternative to conventional reward maximization for goal-conditional RL. We pose the following questions:
\begin{enumerate}[noitemsep]
    \item Does $f$-PG provide sufficient signals to explore in otherwise challenging sparse reward settings?
    \item How well does our framework perform compared to  discriminator-based approaches?
    \item Can our framework scale to larger domains with continuous state spaces and randomly generated goals?
    \item How do different $f$-divergences affect learning?
\end{enumerate}

The first two questions are answered using a toy gridworld environment. The gridworld has a goal contained in a room which poses a significant exploration challenge. We also show how the dense signal to the gradients of the policy evolves during training on a continuous domain like Reacher. To answer the third question, our framework is compared with several baselines on a 2D Maze solving task (Point Maze). Additionally, we scale to more complex tasks such as FetchReach \cite{fetch} and an exploration-heavy PointMaze. 

\subsection{Gridworld}
\label{sec:gridworld}
We use a gridworld environment to compare and visualize the effects of using different shaping rewards for exploration. We discussed this environment briefly in Section \ref{sec:analysis}. The task is for the agent to reach the goal contained in a room. The only way to reach the goal is to go around the wall. The task reward is $1$ when the agent reaches the room otherwise it is $0$. The state space is simply the $(x, y)$ coordinates of the grid and the goal is fixed. A detailed description of the task is provided in Appendix \ref{gridworld_appendix}. Although the environment seems simple, exploration here is very difficult as there is no incentive for the agent to go around the wall.

\begin{figure}[h]
    \centering
    \begin{subfigure}[t]{0.22\textwidth}
        \centering
        \includegraphics[width=\textwidth]{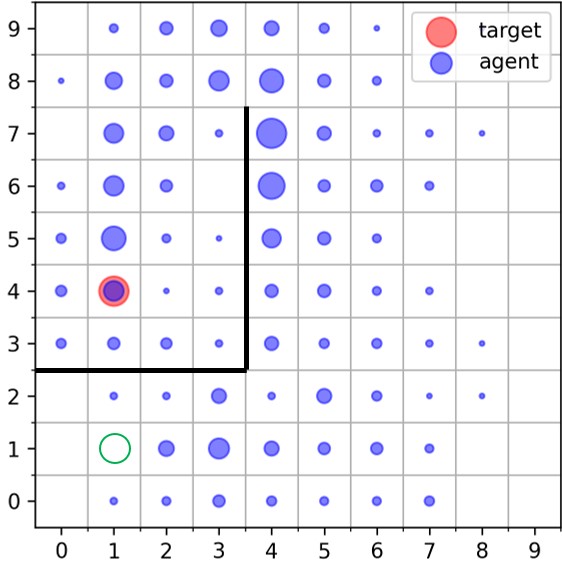}
        \caption{$fkl$-PG}
    \end{subfigure}
    \hfill
    \begin{subfigure}[t]{0.22\textwidth}
        \centering
        \includegraphics[width=\textwidth]{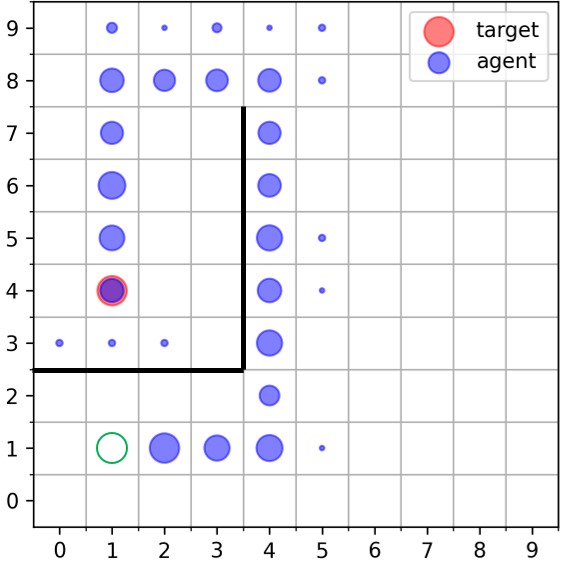}
        \caption{$rkl$-PG}
    \end{subfigure}
    \hfill
    \begin{subfigure}[t]{0.22\textwidth}
        \centering
        \includegraphics[width=\textwidth]{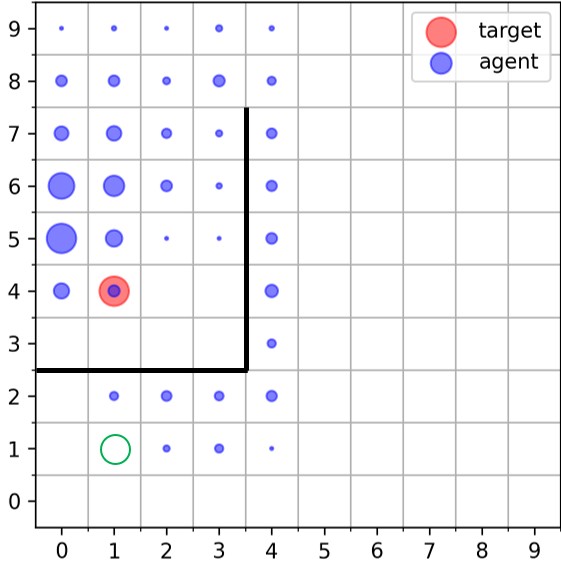}
        \caption{AIM}
    \end{subfigure}
    \hfill
    \begin{subfigure}[t]{0.22\textwidth}
        \centering
        \includegraphics[width=\textwidth]{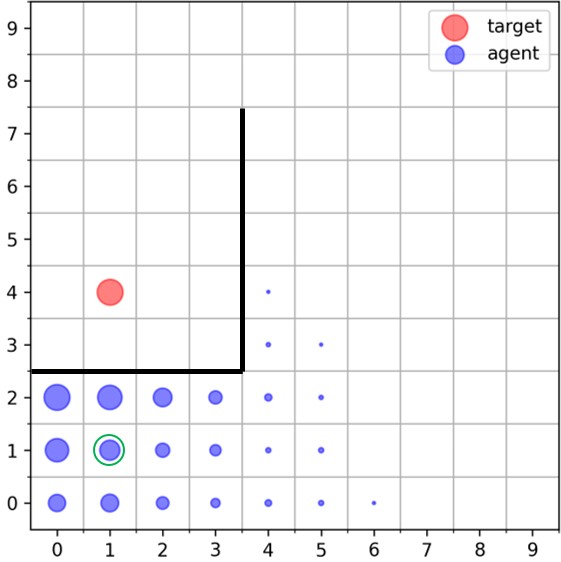}
        \caption{GAIL}
    \end{subfigure}
    \caption{\small{Gridworld: The agent needs to move from the green circle to the red circle. The state visitations of the policies (after 500 policy updates) are shown when using our framework for training (\emph{fkl}, \emph{rkl}) compared with AIM and GAIL trained on top of soft Q learning.}}
    \vspace{-5pt}
    \label{fig:gridworld}
\end{figure}

Our framework is compared against AIM \citep{aim}, which initially introduced this environment and uses a shaping reward obtained from state-matching to solve it, and GAIL \citep{gail}, which uses a discriminator to learn the probability of a state being the goal state. We provide a comparison to other recent methods in Appendix \ref{gridworld_appendix}. All the baselines are implemented on top of Soft Q Learning \citep{soft_q_learning} which along with maximizing the augmented rewards, also maximizes the entropy of the policy while $f$-PG is implemented as an on-policy algorithm without any extrinsic entropy maximization objective. It can be seen from Figure \ref{fig:gridworld} that, $f$-PG can explore enough to find the way around the room which is difficult for methods like GAIL even after the entropy boost. AIM learns a potential function and can also find its way across the wall. As expected, $fkl$-PG converges to the policy maximizing the entropy of the state visitation while $rkl$-PG produces the optimal state visitation as expected from Theorem \ref{opt}. 
This simple experiment clearly illustrates two things: (1) $f$-PG can generate dense signals to explore the state space and search for the goal and (2) although discriminator-based methods like GAIL try to perform state-matching, they fail to explore the space well.

\subsection{Point Maze}
\label{pointmaze_expt}
While the gridworld poses an exploration challenge, the environment is simple and has only one goal. This experiment shows that $f$-PG scales to larger domains with continuous state space and a large set of goals.
We use the Point Maze environments \citep{d4rl} which are a set of offline RL environments, and modify it to support our online algorithms.
The state space is continuous and consists of the position and velocity of the agent and the goal. The action is the force applied in each direction. There are three variations of the environment namely \emph{PointMazeU}, \emph{PointMazeMedium}, \emph{PointMazeLarge}. For the details of the three environments, please refer to Appendix \ref{pointmaze_appendix}.

We compare $f$-PG with several goal-based shaping reward, (used alongside the task reward as described in \cite{Ng1999PolicyIU}) to optimize a PPO policy\footnote{Using spinning up implementation:\\ \url{https://spinningup.openai.com/en/latest/_modules/spinup/algos/pytorch/ppo/ppo.html}}. The rewards tried (along with their abbreviations in the plots) are AIM \citep{aim}(\emph{aim}), GAIL \citep{gail}(\emph{gail}), AIRL \citep{airl}(\emph{airl}) and F-AIRL \citep{f-max}(\emph{fairl}). All these methods employ a state-matching objective. AIM uses Wasserstein's distance while the rest use some form of $f$-divergence. But, all of them rely on discriminators. Along with these baselines, we experiment using our learning signal as a shaping reward (\emph{fkl-rew}). Additionally, we also compare with PPO being optimized by only the task reward (\emph{none}). For our method, we have only shown results for $fkl$-PG. For the rest of the possible $f$-divergences, refer to Section \ref{sec:compare_f_div}.

\begin{figure}[h]
    \centering
    \vspace{-10pt}
    \includegraphics[width=\textwidth]{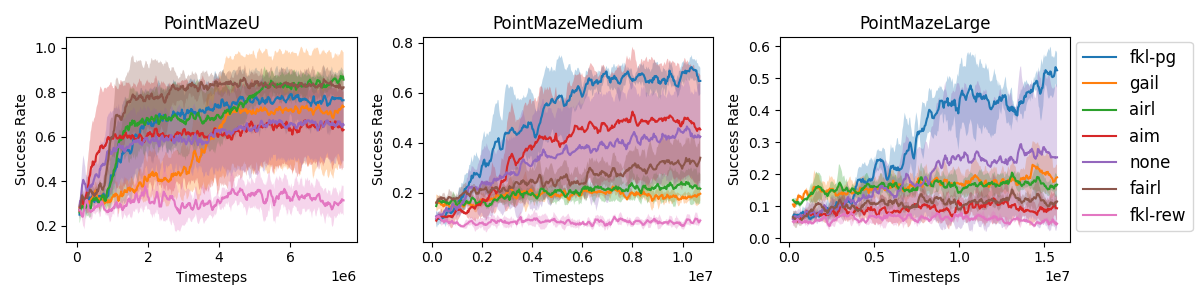}
    \vspace{-20pt}
    \caption{\small{Success rates (averaged over 100 episodes and 3 seeds) of $fkl$-PG and all the baselines. $fkl$-PG performs well in all three environments and better than the baseline shaping rewards in the two tougher environments.}}
    \vspace{-5pt}
    \label{fig:main_plot}
\end{figure}
Figure \ref{fig:main_plot} (plotting mean and std-dev for 3 seeds) clearly illustrates that $fkl$-PG is able to perform well in all three environments.
In fact, it performs better than the baselines in the more difficult environments.
It can also be seen that shaping rewards can often lead to suboptimal performance as \emph{none} is higher than a few of the shaping rewards. As expected, the curve \emph{fkl-new} performs poorly. In the simpler PointMazeU environment, the performance for most of the shaping rewards are similar (along with \emph{none}) but in more complex PointMazeMedium and PointMazeLarge, a lot of these shaping rewards fail. 

\subsection{Scaling to Complex Tasks}
\label{complex_expt}
We scale our method to more complex tasks such as FetchReach \citep{fetch} and a difficult version of PointMaze. In the PointMaze environments used in the previous section, distributions from which the initial state and the goal are sampled, have a significant overlap easing the exploration. We modify these environments to ensure a significant distance between the sampled goal distributions and the agent's state-visitation distribution as shown in Figure \ref{fig:complex} (top), making exploration highly challenging. Figure \ref{fig:complex} (bottom) shows the comparison of $fkl$-PG with GAIL \citep{gail} and AIM \citep{aim}. 

\begin{figure}[h]
    \centering
    \includegraphics[width=\textwidth]{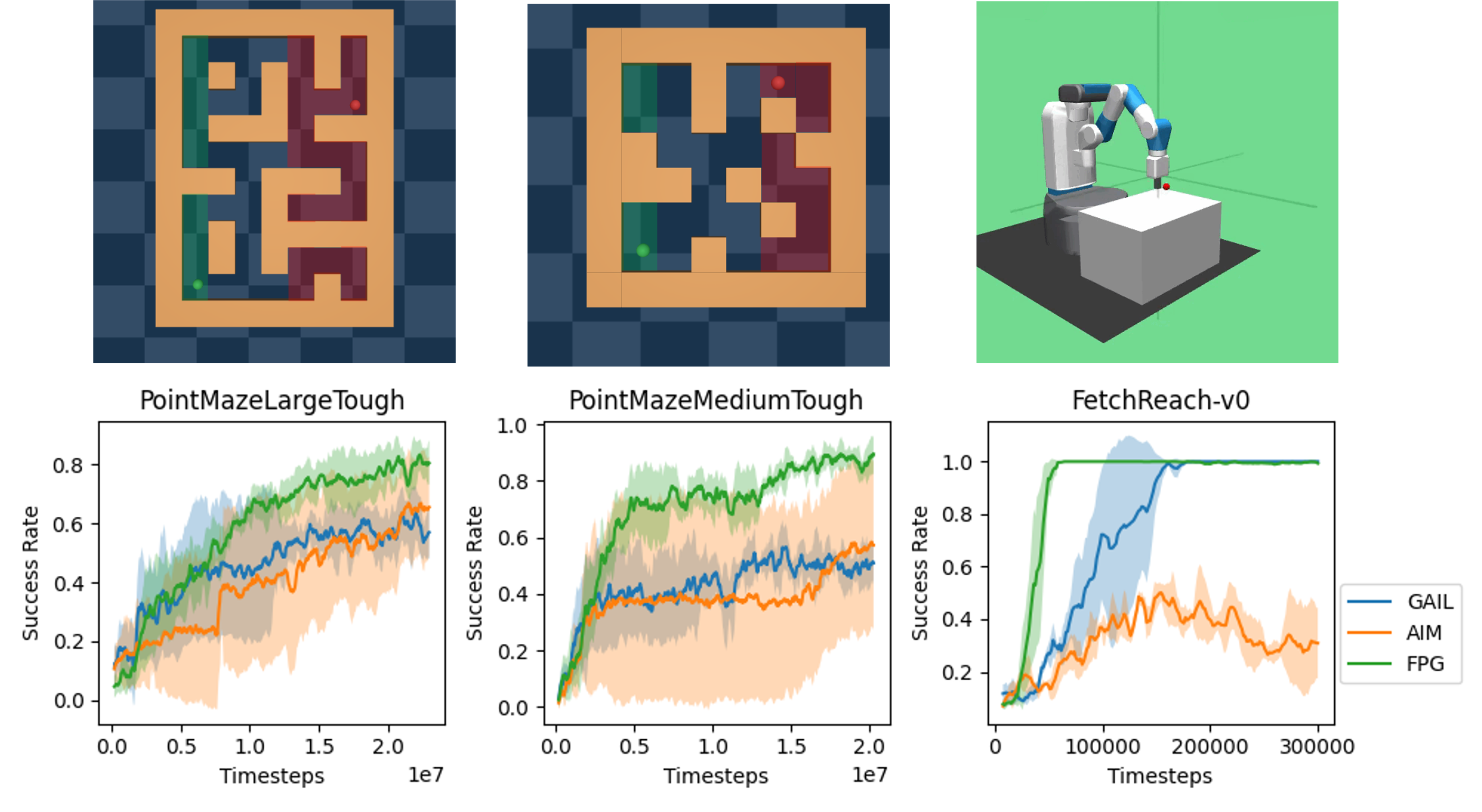}
    \caption{\small{(top): Description of the environments. In the PointMaze environments, the green and red shades represent the distributions from which the initial state and goal states are sampled. (bottom): Success rates (averaged over 100 episodes and 3 seeds) of $fkl$-PG, GAIL and AIM. $fkl$-PG outperforms these baselines with considerably lower variance.}}
    \vspace{-8pt}
    \label{fig:complex}
\end{figure}

The following can be concluded from these experiments: (1) The discriminative-based methods heavily depend on coverage assumptions and fail in situations where there is no significant overlap between the goal distribution and the agent's state visitation distribution. $fkl$-PG does not depend on any such assumptions. (2) $f$-PG is considerably more stable than these baselines (as indicated by the variance of these methods).

\subsection{Comparing different $f$-divergences}
\label{sec:compare_f_div}
We perform an ablation to compare different $f$-divergences on their performances on the three Point Maze environments. 
\begin{figure}[h]
\vspace{-10pt}
    \centering
    \includegraphics[width=\textwidth]{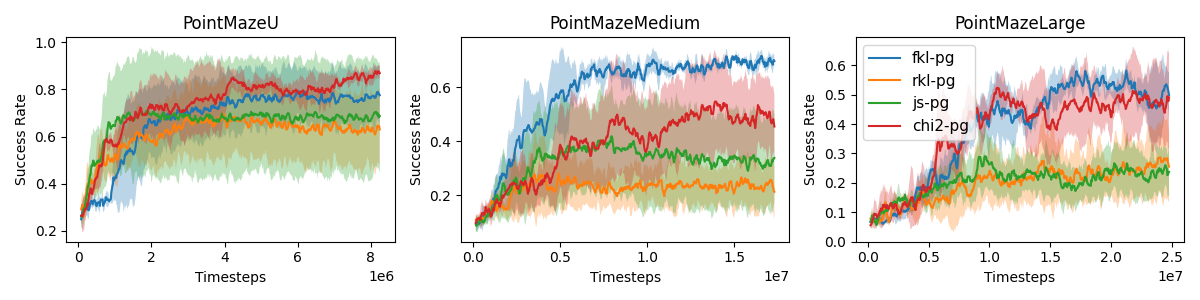}
    \vspace{-15pt}
    \caption{\small{Success rates (averaged over 100 episodes and 3 seeds) of $f$-PG for different $f$. $fkl$-PG performs the best followed by $\chi^2$-PG.}}
\vspace{-5pt}
    \label{fig:f_plot}
\end{figure}
Figure \ref{fig:f_plot} (plotting mean and std-dev for 3 seeds) show that, empirically, $fkl$-PG performs the best followed by $\chi^2$-PG. Interestingly, both of these do not guarantee optimal policies but it can be shown from Lemma \ref{lemma:fkl_div} that $fkl$-PG converges to the policy that along with maximizing for a ``reward", maximizes the entropy of the state-visitation. A similar result can be shown for $\chi^2$ as well (proof in the Appendix \ref{analysis_appendix}). This result can be explained by the need for exploration in the larger mazes, hence learning policies to keep the entropy of the state visitation high.

\section{Discussion}

This paper derives a novel framework for goal-conditioned RL in the form of an on-policy algorithm $f$-policy gradients which minimizes the $f$-divergence between the agent's state visitation and the goal distribution. It proves that for certain $f$-divergences, we can recover the optimal policy while for some, we obtain a policy maximizing the entropy of the state-visitation. Entropy-regularized policy optimization ($s$-MaxEnt RL) for metric-based shaping rewards can be shown as a special case of $f$-PG where $f$ is \emph{fkl}. $f$-PG can provide an exploration bonus when the agent has yet not seen the goal. We demonstrated that $f$-PG can scale up to complex domains.

Through this work, we introduce a new perspective for goal-conditioned RL.
By circumventing rewards, $f$-PG can avoid issues that arise with reward misspecification \citep{reward_misdesign}.
There are several avenues to focus on for future work. First, the current framework is on-policy and poses an exploration challenge.
An avenue for future work could be to develop an off-policy way to solve the objective.
Second, this paper does not tackle goal distributions with several modes.
Such a target distribution would be interesting to tackle in future work.

\section{Acknowledgements}

This work was in part supported by Cisco Research. Any opinions, findings and conclusions, or recommendations expressed in this material are those of the authors and do not necessarily reflect the views of Cisco Research.

This work has partially taken place in the Learning Agents Research
Group (LARG) at UT Austin.  LARG research is supported in part by NSF
(FAIN-2019844, NRT-2125858), ONR (N00014-18-2243), ARO (E2061621),
Bosch, Lockheed Martin, and UT Austin's Good Systems grand challenge.
Peter Stone serves as the Executive Director of Sony AI America and
receives financial compensation for this work.  The terms of this
arrangement have been reviewed and approved by the University of Texas
at Austin in accordance with its policy on objectivity in research.

\bibliography{neurips/ref}
\bibliographystyle{refs}
\newpage
\appendix
\addtocontents{toc}{\protect\setcounter{tocdepth}{3}}
\tableofcontents
\newpage
\section{Analysis of $J(\theta)$}
\label{analysis_appendix}

In this section, we will present the proofs for all the Lemmas and Theorems stated in Section 5.1.

\subsection{Proof for Theorem 5.1}

To prove Theorem 5.1, we need the following Lemmas. Lemma \ref{L1_p} states that among all policies, the optimal policy has the highest state visitation at the goal. 

\begin{lemma}
\label{L1_p}
Let $\mathcal{D}$ be the set of all possible state visitations for the agent following some policy $\pi \in \Pi$. Let $\pi^*$ be the optimal goal-conditioned policy. This optimal policy's state-visitation distribution will have the most measure at the goal for all $p_\pi \in \mathcal{D}$ i.e., $\pi^* \implies p_{\pi^*}(g) \geq p_\pi(g), \forall \text{ } p_\pi \in \mathcal{D}$.
\end{lemma}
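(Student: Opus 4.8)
The plan is to reduce the statement to the defining property of the optimal policy by showing that the goal-visitation measure $p_\pi(g)$ is, up to a policy-independent constant, exactly the expected return under the sparse reward. Once this identification is in hand, the inequality $p_{\pi^*}(g) \ge p_\pi(g)$ follows immediately from the fact that $\pi^*$ maximizes the expected return by definition, so there is no optimization argument to redo.

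First I would expand the expected return under the sparse reward. Since $r(s',g) = \mathbbm{1}[s' = g]$, the return accumulated along a trajectory simply counts the (discounted) visits to the goal, so that $\mathbb{E}_{\pi, s_0}[H_g(s_0)] = \mathbb{E}_{\tau \sim p_\pi(\tau)}\left[\sum_{t=0}^{T} \gamma^t \mathbbm{1}[s_{t+1} = g]\right]$. In the undiscounted regime this is precisely $\mathbb{E}_{\tau \sim p_\pi}[\eta_\tau(g)]$, the expected number of times the goal is visited along a trajectory, which is the identification already invoked in Theorem \ref{grad_goal}.

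Next I would connect this quantity to the visitation measure itself. From the definition $p_\pi(s) = \frac{1}{Z}\int p_\pi(\tau)\,\eta_\tau(s)\,d\tau$ we get $p_\pi(g) = \frac{1}{Z}\,\mathbb{E}_{\tau \sim p_\pi}[\eta_\tau(g)]$, with normalizer $Z = \int p_\pi(\tau)\left(\int \eta_\tau(s)\,ds\right)d\tau$. The crucial observation is that $\int \eta_\tau(s)\,ds$ is the total number of state-visits in $\tau$, which equals the fixed horizon $T+1$ for every trajectory; hence $Z = T+1$ is independent of $\pi$. Therefore $p_\pi(g) = \frac{1}{T+1}\,\mathbb{E}_{\tau \sim p_\pi}[\eta_\tau(g)]$, i.e. the goal-visitation measure is a fixed positive multiple of the expected return. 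Since $\pi^*$ maximizes $\mathbb{E}_{\pi,s_0}[H_g(s_0)]$ over all $\pi \in \Pi$, and this objective is a constant positive multiple of $p_\pi(g)$, the policy $\pi^*$ also maximizes $p_\pi(g)$, giving $p_{\pi^*}(g) \ge p_\pi(g)$ for every $p_\pi \in \mathcal{D}$.

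The main obstacle is the mismatch between the discounted return and the undiscounted count $\eta_\tau(g)$: I expect to resolve it either by working in the undiscounted finite-horizon regime (so that $\gamma^t \equiv 1$ and the return equals $\eta_\tau(g)$ exactly), or by replacing $\eta_\tau$ with its discounted analogue and re-checking that the corresponding normalizer $Z$ stays policy-independent. A secondary point requiring care is verifying that $\int \eta_\tau(s)\,ds$ is genuinely constant across trajectories, which relies on the fixed-horizon assumption stated in the background and would need adjustment if trajectories could terminate early.
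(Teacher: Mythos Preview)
Your proposal is correct and follows essentially the same approach as the paper: identify the expected return under the sparse reward with the goal-visitation measure $p_\pi(g)$ (up to a policy-independent constant), and then invoke the defining optimality of $\pi^*$. The paper's proof is terser---it writes $R_\pi = \mathbb{E}_{p_\pi}[r(s)] = p_\pi(g)$ directly without isolating the normalizer---whereas you spell out explicitly that $Z$ is policy-independent because of the fixed horizon, and you flag the discounted/undiscounted mismatch; these are details the paper glosses over but your extra care does not change the argument's substance.
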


\begin{proof}
Let $\pi^*$ be the optimal policy and $p_{\pi^*}$ be the corresponding state visitation distribution. The reward for the sparse setting is designed as,
\[
r(s) = \begin{cases}
1 & s = g, \\
0 & \text{otherwise}.
\end{cases}
\]
Hence the expected return for a policy $\pi$ is $R_{\pi}$ is
\begin{align*}
    R_{\pi} &= \mathbbm{E}_{p_{\pi}} [r(s)] \\
    &= p_{\pi}(g).
\end{align*}
The return for the optimal policy is maximum among all policies so $R_{\pi^*} \geq R_{\pi}, \forall \pi\in\Pi$. This implies $p_{\pi^*}(g) \geq p_\pi(g), \forall p_\pi \in \mathcal{D}$.
\end{proof}

Lemma \ref{L2_p} states that the f-divergence between $p_\pi(s)$ and $p_g(s)$ is a decreasing function with respect to $p_\pi(s)$. This means that as the objective $J(\theta)$ obtains its minimum value when $p_\pi(g)$ is highest.

\begin{lemma}
    \label{L2_p}
$D_f(p_\pi(\cdot) || p_g(\cdot))$ is a decreasing function with respect $p_\pi(g) \forall f$ if $f'(\infty)$ is defined.
\end{lemma}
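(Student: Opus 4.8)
The plan is to collapse the $f$-divergence $D_f(p_\pi \| p_g)$ into an explicit function of the single scalar $m := p_\pi(g)$, the mass that the agent's state-visitation distribution assigns to the goal, and then to read off monotonicity directly from the convexity of $f$. Since the target $p_g = \delta(g)$ is concentrated on the single point $g$, I would apply the $f$-divergence in the generator form consistent with Table~\ref{tab:f-div}, namely $D_f(P\|Q) = \int_{Q>0} Q\,f(P/Q) + f'(\infty)\,P[Q=0]$, to the pair $P=p_\pi$, $Q=p_g$. The only location where $p_g>0$ is $x=g$, where the density ratio $p_\pi(g)/p_g(g)$ equals $m$, while the remaining mass $1-m$ of $p_\pi$ lives on $\{p_g=0\}$ and is charged at rate $f'(\infty)$. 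This gives the closed form
\begin{equation}
    D_f(p_\pi \| p_g) = f(m) + f'(\infty)\,(1-m).
\end{equation}

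I would then treat the right-hand side as a function of $m\in(0,1]$ and show it is non-increasing. Differentiating gives $\frac{d}{dm}\,D_f = f'(m) - f'(\infty)$. Convexity of $f$ makes $f'$ non-decreasing, and $f'(\infty) = \lim_{u\to\infty} f(u)/u = \sup_{u>0} f'(u)$, so $f'(m) \le f'(\infty)$ for every finite $m$ and the derivative is non-positive; hence $D_f(p_\pi\|p_g)$ decreases as $m=p_\pi(g)$ grows. To avoid any differentiability assumption on $f$, the same conclusion follows from a secant argument: for $m_1 < m_2$ the slope $\frac{f(m_2)-f(m_1)}{m_2-m_1}$ is at most $\sup f' = f'(\infty)$, which rearranges exactly to $D_f\big|_{m_1} \ge D_f\big|_{m_2}$. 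This is precisely where the hypothesis that $f'(\infty)$ be defined enters: for generators such as FKL or $\chi^2$ the recession constant is infinite, the singular mass $1-m$ is penalized at rate $+\infty$, and $D_f$ is not even finite for $m<1$, so the clean monotonicity fails. As a sanity check the formula reproduces the finite entries of Table~\ref{tab:f-div}, e.g.\ RKL with $f'(\infty)=0$ gives $D_f = -\log m$, which is manifestly decreasing.

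The step I expect to be the main obstacle is the first one: rigorously justifying the reduction to $f(m) + f'(\infty)(1-m)$ for a Dirac target. This needs the general Lebesgue-decomposition definition of the $f$-divergence, splitting $p_\pi$ into the part absolutely continuous with respect to $\delta(g)$ --- the atom of mass $m$ at the goal, whose Radon--Nikodym derivative against $p_g$ is $m$ --- and the singular part of mass $1-m$, and then attributing the singular part to the $f'(\infty)$ term rather than to a spurious $f(0)$ contribution. Once this expression is established the monotonicity is immediate. Combined with Lemma~\ref{L1_p}, which identifies the optimal policy as the one maximizing $p_\pi(g)$, this lemma shows that minimizing $D_f(p_\pi\|p_g)$ pushes $m=p_\pi(g)$ to its largest attainable value and therefore selects the optimal policy --- the role the lemma plays in the proof of Theorem~\ref{opt}.
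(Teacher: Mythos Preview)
Your proposal is correct and follows essentially the same route as the paper: collapse $D_f(p_\pi\|p_g)$ for the Dirac target to $f(m)+f'(\infty)(1-m)$ with $m=p_\pi(g)$, differentiate to get $f'(m)-f'(\infty)$, and conclude non-positivity from convexity of $f$. Your additions --- the secant argument avoiding differentiability, the Lebesgue-decomposition justification of the singular term, and the RKL sanity check --- are welcome refinements but do not change the underlying strategy.
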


\begin{proof}
The goal distribution is assumed to be a Dirac distribution i.e., $p_g(s) = 1$ if $s = g$ and $0$ everywhere else. The $f$-divergence between the agent state-visitation distribution, $p_\pi$ and the goal distribution, $p_g$ can be defined as,

\begin{align*}
    D_f(p_\pi || p_g) &= \sum_{p_g > 0}\Big[ p_g(s) f\Big(\frac{p_\pi(s)}{p_g(s)}\Big) \Big] + f'(\infty)p_\pi[p_g = 0] \\
    &=  f(p_\pi(g)) + f'(\infty)( 1 - p_\pi(g)).
\end{align*}
Let $\mathcal{F} = D_f(p_\pi || p_g)$. Differentiating $\mathcal{F}$ w.r.t. $p_\pi(g)$, we get $\mathcal{F}' = f'(p_\pi(g)) - f'(\infty)$. Since $f$ is a convex function (by the definition of $f$-divergence), $f'(x) \leq f'(y), \forall x \leq y$.

Hence, if $f'(\infty)$ is defined, $\mathcal{F}' \leq 0$. Hence $\mathcal{F} = D_f(p_\pi || p_g)$ is a decreasing function with respect $p_\pi(g)$.
\end{proof}

Additionally, we need Lemma \ref{L4_p} and Corollary \ref{cor1} to complete the proof of Theorem \ref{opt}.

\begin{lemma}
    \label{L4_p}
    If any two policies $\pi_1$ and $\pi_2$ have the same state visitation at a given goal, they have the same returns for that goal.
\end{lemma}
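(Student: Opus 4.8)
The plan is to reduce the goal-conditioned return to a simple functional of the state-visitation distribution and then observe that this functional reads off only the single number $p_\pi(g)$, so that two policies agreeing on $p_\pi(g)$ must have identical returns. This is essentially the same computation already carried out in the proof of Lemma \ref{L1_p}, reused in the opposite direction: there it was used to compare returns across all policies, whereas here I only need the intermediate identity it established.

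First I would recall that in the sparse goal-conditioned setting the reward is the indicator $r(s) = \mathbbm{1}[s = g]$, so that writing the expected return as an expectation of the per-state reward against the agent's visitation distribution gives
\begin{equation*}
    R_\pi = \mathbbm{E}_{p_\pi}[r(s)] = p_\pi(g),
\end{equation*}
exactly as in Lemma \ref{L1_p}. The crucial structural observation is that the right-hand side depends on the policy \emph{only} through the scalar $p_\pi(g)$: every other feature of the visitation distribution is annihilated by the indicator reward. The second and final step is then to substitute the hypothesis: if $\pi_1$ and $\pi_2$ satisfy $p_{\pi_1}(g) = p_{\pi_2}(g)$, then $R_{\pi_1} = p_{\pi_1}(g) = p_{\pi_2}(g) = R_{\pi_2}$, which is exactly the claim.

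I do not expect a genuine obstacle here, since the content is already contained in the return computation of Lemma \ref{L1_p}; the statement is really a bookkeeping corollary isolating the fact that the sparse return is a function of $p_\pi(g)$ alone. The only point requiring a little care is the accounting between the discounted return $H_g$ and the (unnormalized) visitation frequencies $\eta_\tau$ that define $p_\pi$: once the return is expressed as the expectation of the indicator reward against $p_\pi$, this bookkeeping is absorbed into the definition of $p_\pi$ and the collapse to the single value $p_\pi(g)$ is immediate.
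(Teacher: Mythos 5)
Your proposal is correct and follows essentially the same route as the paper: the paper's proof likewise observes that $R_{\pi} = \mathbbm{E}_{p_{\pi}}[r(s)] = p_{\pi}(g)$ under the sparse indicator reward, so equality of the visitation measure at the goal immediately yields equality of returns. No gap to report.
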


\begin{proof}
Follows directly from the definition of returns. $R_{\pi} = \mathbbm{E}_{p_{\pi}} [r(s)] = p_{\pi}(g)$. Hence two policies $\pi_1$ and $\pi_2$ with the same state visitation at the goal will have the same returns.
\end{proof}

\begin{corollary}
\label{cor1}
    Any policy that can lead to the state-visitation distribution of the optimal policy $p_{\pi^{*}}$ is optimal.
\end{corollary}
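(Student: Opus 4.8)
The plan is to reduce the statement to Lemma \ref{L4_p}, which is essentially all that is required. Suppose $\pi$ is a policy whose state-visitation distribution coincides with that of the optimal policy, $p_\pi = p_{\pi^*}$. The goal is to show that $\pi$ attains the maximal return, so that $\pi \in \argmax_{\pi' \in \Pi} R_{\pi'}$ and is therefore optimal by definition.

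First I would recall, exactly as derived in the proof of Lemma \ref{L1_p}, that under the sparse Dirac reward the expected return collapses to the visitation mass at the goal, $R_\pi = \mathbbm{E}_{p_\pi}[r(s)] = p_\pi(g)$. Thus the return is a functional of the single scalar $p_\pi(g)$ rather than of the full visitation distribution.

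Next I would observe that, since $p_\pi = p_{\pi^*}$ as distributions, they agree in particular at the goal state, so $p_\pi(g) = p_{\pi^*}(g)$. Applying Lemma \ref{L4_p} (equivalently, the identity $R_\pi = p_\pi(g)$ above), equal visitation mass at $g$ gives equal returns, whence $R_\pi = R_{\pi^*}$. Because $\pi^*$ is optimal, $R_{\pi^*}$ is the maximal return over $\Pi$; hence $R_\pi$ is maximal and $\pi$ is optimal.

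There is no real obstacle here, and that is precisely the point worth noting: matching the entire visitation distribution is a far stronger hypothesis than the sparse-reward objective ever uses, since the return depends only on the mass placed at $g$. Consequently no argument beyond agreement at the single goal state is needed. This corollary then serves as the bridge in Theorem \ref{opt}, letting one pass from ``minimizing $D_f$ maximizes $p_\pi(g)$'' (Lemmas \ref{L1_p} and \ref{L2_p}) to ``the minimizer is an optimal policy.''
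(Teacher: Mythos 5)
Your argument is correct and is exactly the paper's route: the paper's proof of this corollary is simply ``directly follows from Lemma \ref{L4_p},'' and you have spelled out that reduction (equal visitation distributions agree at $g$, so returns agree, so the policy is optimal). Your added observation that only agreement at the single goal state is actually needed is accurate and consistent with how the corollary is used in Theorem \ref{opt}.
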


\begin{proof}
Directly follows from Lemma \ref{L4_p}.
\end{proof}

\begin{customthm}{5.1}
    \label{opt_p}
    The policy that minimizes $D_f(p_\pi || p_g)$ for a convex function $f$ with $f(1) = 0$ and $f'(\infty)$ being defined, is the optimal policy.
\end{customthm}
\begin{proof}
Lemma \ref{L1_p} proves that the optimal policy has the maximum state-visitation probability.  Lemma \ref{L2_p} proves that the $f$-divergence objective decreases with increasing the state-visitation probability at the goal. In other words, to minimize the $f$-divergence, we need to maximize the state visitation at goal. Corollary \ref{cor1} further indicates that any policy that can lead to the state-visitation distribution of the optimal policy i.e., any policy that maximizes the state-visitation distribution at the goal state is an optimal policy.
\end{proof}

\subsection{Proof for Lemma 5.1}
\begin{customlemma}{5.1}
\label{lemma:fkl_div_p}
    $fkl$-PG produces a policy that maximizes both the reward $\log{p_g(s)}$ and the entropy of the state-visitation distribution.
\end{customlemma}
\begin{proof}
For $fkl$-PG, $f = u\log{u}$. Hence, $J(\theta) = D_f(p_{\pi} || p_g)$ can be written as,

\begin{align*}
    D_f(p_{\pi} || p_g) &= \mathbbm{E}_{p_\pi} \Big[ \log \frac{p_\pi}{p_g} \Big] \\
    &= - \Big[\mathbbm{E}_{p_\pi} [\log p_g ] - \mathbbm{E}_{p_\pi} [ \log p_\pi ] \Big] \\
     &= - \Big[\mathbbm{E}_{p_\pi} [\log p_g ] + \mathcal{H}(p_\pi) \Big]
\end{align*}

where $\mathcal{H}(p_\pi)$ is the entropy of the agent's state visitation distribution. Minimizing $D_f(p_{\pi} || p_g)$ will correspond to maximizing the reward $r(s) = \log p_g(s)$ and the entropy of $p_\pi$.
\end{proof}

A similar result could be proved for $\chi^2$ divergence:
\begin{lemma}
    If $f(u) = (u - 1)^2$ ($\chi^2$ divergence), $D_f(p_{\pi} || p_g)$ is the upper bound of $D_{FKL}(p_{\pi} || p_g) - 1$. Hence minimizing $D_{\chi^2}$ will also minimize $D_{FKL}$ recovering the entropy regularized policy.
\end{lemma}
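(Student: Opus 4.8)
The plan is to reduce the claimed divergence inequality to a single pointwise inequality between integrands and then integrate. First I would write both divergences in the paper's convention $D_f(P\|Q)=\int Q(x)\,f\!\big(P(x)/Q(x)\big)\,dx$, so that, with $P=p_\pi$ and $Q=p_g$,
\[
D_{\chi^2}(p_\pi\|p_g)=\int p_g(x)\Big(\tfrac{p_\pi(x)}{p_g(x)}-1\Big)^2 dx,
\qquad
D_{FKL}(p_\pi\|p_g)=\int p_\pi(x)\log\tfrac{p_\pi(x)}{p_g(x)}\,dx,
\]
the latter exactly as used in the proof of Lemma \ref{lemma:fkl_div_p}. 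Introducing the likelihood ratio $u(x)=p_\pi(x)/p_g(x)$ turns these into $\int p_g\,(u-1)^2$ and $\int p_g\,u\log u$ respectively, and the normalization identity $\int p_g\,u\,dx=\int p_\pi\,dx=1$ lets me rewrite the target bound $D_{\chi^2}\ge D_{FKL}-1$ in the equivalent form $\int p_g\,(u-1)^2 \ge \int p_g\,(u\log u-u)\,dx$.

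Since $p_g\ge 0$, this follows once I establish the pointwise inequality $(u-1)^2\ge u\log u - u$ for every $u>0$. I would prove it from the elementary bound $\log u\le u-1$: multiplying by $u\ge 0$ gives $u\log u\le u^2-u$, hence $u\log u-u\le u^2-2u\le u^2-2u+1=(u-1)^2$ (the slack $+1$ even makes the bound strict away from degeneracy). Integrating this inequality against $p_g$ and using the normalization identity yields $D_{\chi^2}(p_\pi\|p_g)\ge D_{FKL}(p_\pi\|p_g)-1$, which is precisely the upper-bound claim.

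For the interpretive conclusion I would invoke Lemma \ref{lemma:fkl_div_p}: because $D_{\chi^2}$ dominates $D_{FKL}-1$, driving $D_{\chi^2}$ toward its minimum forces $D_{FKL}$ down as well, and Lemma \ref{lemma:fkl_div_p} already identifies the minimizer of $D_{FKL}$ as the policy that maximizes the reward $\log p_g(s)$ together with the entropy of the state-visitation distribution, so the same entropy-regularized policy is recovered. I do not expect a serious obstacle; the only points needing mild care are the convention check (which density sits inside $f$, and whether the $\tfrac12$ appearing in Table \ref{tab:f-div} is kept — the lemma drops it by taking $f(u)=(u-1)^2$) and the behavior of the integrand at $u=0$, where $u\log u\to 0$ keeps every term finite. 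The substantive content reduces entirely to the one-line convex inequality $\log u\le u-1$.
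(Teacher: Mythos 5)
Your proposal is correct and follows essentially the same route as the paper: the paper likewise reduces $D_{\chi^2}(p_\pi\|p_g)$ to $\mathbb{E}_{p_\pi}\!\left[p_\pi/p_g\right]-1$ and then applies the elementary inequality $\log x \le x$ under the expectation to dominate $D_{FKL}(p_\pi\|p_g)-1$, which is the same one-line convexity fact you use in the form $\log u \le u-1$ multiplied by $u$ and integrated against $p_g$. The only differences are cosmetic (you carry the pointwise inequality under $\int p_g(\cdot)\,du$ rather than under $\mathbb{E}_{p_\pi}$, and you are slightly more careful about the $u\to 0$ endpoint and the dropped factor of $\tfrac12$), and you share with the paper the same loose final step of inferring that minimizing an upper bound minimizes the bounded quantity.
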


\begin{proof}
With $f = (u - 1)^2$, $D_f(p_{\pi} || p_g)$ can be written as,

\begin{align*}
    D_f(p_{\pi} || p_g) &= \int p_g(s) \Big( \frac{p_\pi(s)}{p_g(s)} - 1 \Big)^2 ds \\
    &= \int p_g(s) \Big( \Big(\frac{p_\pi(s)}{p_g(s)}\Big)^2 -2\frac{p_\pi(s)}{p_g(s)} + 1 \Big) ds \\
     &= \int p_\pi(s) \frac{p_\pi(s)}{p_g(s)} -2p_\pi(s)+ p_g(s) ds \\
      &= \int p_\pi(s) \frac{p_\pi(s)}{p_g(s)}ds -1 \\
       &= \mathbbm{E}_{p_\pi(s)} \Big[ \frac{p_\pi(s)}{p_g(s)} \Big]-1 \\
\end{align*}

Since, $x > \text{log }x$, 
\begin{align*}
   & \implies \mathbbm{E}_{p_\pi(s)} [x] > \mathbbm{E}_{p_\pi(s)} [\text{log } x] \\
   & \implies \mathbbm{E}_{p_\pi(s)} \Big[ \frac{p_\pi(s)}{p_g(s)} \Big] > \mathbbm{E}_{p_\pi(s)} \Big[\text{log } \frac{p_\pi(s)}{p_g(s)}\Big] \\
   & \implies \mathbbm{E}_{p_\pi(s)} \Big[ \frac{p_\pi(s)}{p_g(s)} \Big] - 1 > \mathbbm{E}_{p_\pi(s)} \Big[\text{log } \frac{p_\pi(s)}{p_g(s)}\Big] - 1\\
\end{align*}

Minimizing LHS will also minimize RHS. RHS is essentially $D_{KL}(p_{\pi} || p_g) - 1$. The $-1$ will not have any effect on the minimization of $D_{KL}(p_{\pi} || p_g)$. 
\end{proof}

\section{Gradient based optimization}
\label{grad_opt_appendix}

\subsection{Derivation of gradients}
\begin{customthm}{4.1}
\label{th:grad_p}
The gradient of $J(\theta)$ as defined in Equation 2 is given by,
\begin{equation}
     \nabla_\theta J(\theta) = \frac{1}{T}\mathbbm{E}_{\tau \sim p_\theta(\tau)} \Bigg[ \Big[ \sum_{t=1}^T \nabla_\theta \log \pi_\theta(a_t|s_t) \Big]
     \Big[ \sum_{t=1}^T f' \Big(\frac{p_\theta(s_t)}{p_g(s_t)}\Big)\Big]\Bigg].
     \label{eqn:grad_p}
\end{equation}
\end{customthm}

\begin{proof}
Lets start with the state-visitation distribution. In Section 3, it was shown that the state-visitation distribution can be written as,
\begin{align*}
    p_\theta(s) &\propto \int p(\tau) \Pi_{t=1}^T \pi_\theta(s_t) \eta_\tau(s) d\tau\\
    \implies p_\theta(s) &\propto \int p(\tau) e^{\sum_{t=1}^T \log \pi_\theta(s_t)} \eta_\tau(s) d\tau
\end{align*}
{\allowdisplaybreaks
\begin{align*}
    & \implies p_\theta(s) = \frac{\int p(\tau) e^{\sum_{t=1}^T \log \pi_\theta(s_t)} \eta_\tau(s) d\tau}{\int \int p(\tau) e^{\sum_{t=1}^T \log  \pi_\theta(s_t)} \eta_\tau(s) d\tau ds} \\
    & \implies p_\theta(s) = \frac{\int p(\tau) e^{\sum_{t=1}^T \log \pi_\theta(s_t)} \eta_\tau(s) d\tau}{\int p(\tau) e^{\sum_{t=1}^T \log \pi_\theta(s_t)} \int \eta_\tau(s) ds d\tau} \\
    & \implies p_\theta(s) = \frac{\int p(\tau) e^{\sum_{t=1}^T \log \pi_\theta(s_t)} \eta_\tau(s) d\tau}{T \int p(\tau) e^{\sum_{t=1}^T \log \pi_\theta(s_t)} d\tau} \\
    & \implies p_\theta(s) = \frac{f(s)}{Z}
\end{align*}
}

where $f(s) = \int p(\tau) e^{\sum_{t=1}^T \log \pi_\theta(s_t)} \eta_\tau(s) d\tau$ and $Z = T \int p(\tau) e^{\sum_{t=1}^T \log \pi_\theta(s_t)} d\tau$.

Differentiating w.r.t. $\pi_\theta(s^*)$,

\begin{align*}
    \frac{d f(s)}{d \pi_\theta(s^*)} = \frac{\int p(\tau) e^{\sum_{t=1}^T \log \pi_\theta(s_t)} \eta_\tau(s) \eta_\tau(s^*) d\tau}{\pi_\theta(s^*)}
\end{align*}
and,
\begin{align*}
    \frac{d Z}{d \pi_\theta(s^*)} &= \frac{T \int p(\tau) e^{ \sum_{t=1}^T \log \pi_\theta(s_t)} \eta_\tau(s^*) d\tau}{\pi_\theta(s^*)} \\
    &= \frac{T f(s^*)}{\pi_\theta(s^*)}
\end{align*}

Computing $\frac{d p_\theta(s)}{\pi_\theta(s^*)}$ using $\frac{d f(s)}{d \pi_\theta(s^*)}$ and $\frac{d Z}{d \pi_\theta(s^*)}$,

{\allowdisplaybreaks
\begin{align*}
    \frac{d p_\theta(s)}{\pi_\theta(s^*)} &= \frac{Z \frac{d f(s)}{d \pi_\theta(s^*)} - f(s) \frac{d Z}{d \pi_\theta(s^*)}}{Z^2}\\
    &= \frac{\int p(\tau) e^{\sum_{t=1}^T \log \pi_\theta(s_t)} \eta_\tau(s) \eta_\tau(s^*) d\tau}{ Z\pi_\theta(s^*)} - \frac{f(s)}{Z} T \frac{f(s^*)}{Z \pi_\theta(s^*)}\\
    &= \frac{\int p(\tau) e^{ \sum_{t=1}^T \log \pi_\theta(s_t)} \eta_\tau(s) \eta_\tau(s^*) d\tau}{ Z\pi_\theta(s^*)} - \frac{T}{\pi_\theta(s^*)} p_\theta(s) p_\theta(s^*)
\end{align*}
}
Now we can compute $\frac{d p_\theta(s)}{d \theta}$,

{\allowdisplaybreaks
\begin{align*}
    \frac{d p_\theta(s)}{d \theta} &= \int \frac{d p_\theta(s)}{\pi_\theta(s^*)} \frac{d \pi_\theta(s^*)}{d \theta} ds^* \\
    &= \int \Big(\frac{\int p(\tau) e^{ \sum_{t=1}^T \log \pi_\theta(s_t)} \eta_\tau(s) \eta_\tau(s^*) d\tau}{ Z\pi_\theta(s^*)} - \frac{T}{\pi_\theta(s^*)} p_\theta(s) p_\theta(s^*) \Big)\frac{d \pi_\theta(s^*)}{d \theta} ds^* \\
    &= \frac{1}{Z} \int \int p(\tau) e^{ \sum_{t=1}^T \log \pi_\theta(s_t)} \eta_\tau(s) \eta_\tau(s^*) \frac{\nabla_\theta \pi_\theta(s^*)}{\pi_\theta(s^*)} ds^* d\tau - T p_\theta(s) \int p_\theta(s^*) \frac{\nabla_\theta \pi_\theta(s^*)}{\pi_\theta(s^*)} ds^* \\
    &= \frac{1}{Z} \int p(\tau) e^{ \sum_{t=1}^T \log \pi_\theta(s_t)} \eta_\tau(s) \sum_{t=1}^T \nabla_\theta \log \pi_\theta(s_t) d\tau - T p_\theta(s) \mathbbm{E}_{s \sim p_\theta(s)}[ \nabla_\theta \log \pi_\theta(s)]\\
\end{align*}
}
The objective $L(\theta) = D_f(p_\theta(s) || p_g(s)) = \int p_g(s) f \Big(\frac{p_\theta(s)}{p_g(s)} \Big) ds $ 

The gradient for $L(\theta)$ will be given by,

{\allowdisplaybreaks
\begin{align*}
    \nabla_\theta L(\theta) &= \int p_g(s) f' \Big(\frac{p_\theta(s)}{p_g(s)} \Big) \Big(\frac{\nabla_\theta p_\theta(s)}{p_g(s)} \Big) ds \\ 
    &= \int \nabla p_\theta(s) f' \Big(\frac{p_\theta(s)}{p_g(s)} \Big) ds \\
    &= \int f' \Big(\frac{p_\theta(s)}{p_g(s)}\Big) \Big( \frac{1}{Z} \int p(\tau) e^{ \sum_{t=1}^T \log \pi_\theta(s_t)} \eta_\tau(s) \sum_{t=1}^T \nabla_\theta \log \pi_\theta(s_t) d\tau\\
    &\qquad - T p_\theta(s) \mathbbm{E}_{s \sim p_\theta(s)}[ \nabla_\theta \log \pi_\theta(s)]  \Big) ds \\
    &= \frac{1}{Z}\int  p(\tau) e^{\sum_{t=1}^T \log \pi_\theta(s_t)}  \sum_{t=1}^T \nabla_\theta \log \pi_\theta(s_t) \int \eta_\tau(s) f' \Big(\frac{p_\theta(s)}{p_g(s)}\Big) ds d\tau\\
    &\qquad - \int T p_\theta(s) f' \Big(\frac{p_\theta(s)}{p_g(s)}\Big) \mathbbm{E}_{s \sim p_\theta(s)}[ \nabla_\theta \log \pi_\theta(s)] ds \\
    &= \frac{1}{Z}\int  p(\tau) e^{\sum_{t=1}^T \log \pi_\theta(s_t)}  \sum_{t=1}^T \nabla_\theta \log \pi_\theta(s_t)  \sum_{t=1}^T f' \Big(\frac{p_\theta(s_t)}{p_g(s_t)}\Big) d\tau\\
    &\qquad - T \int p_\theta(s) f' \Big(\frac{p_\theta(s)}{p_g(s)}\Big) \mathbbm{E}_{s \sim p_\theta(s)}[ \nabla_\theta \log \pi_\theta(s)] ds \\
    &= \frac{1}{T}\int  p_\theta(\tau)  \sum_{t=1}^T \nabla_\theta \log \pi_\theta(s_t) \sum_{t=1}^T f' \Big(\frac{p_\theta(s_t)}{p_g(s_t)}\Big) d\tau\\
    &\qquad - T \mathbbm{E}_{s \sim p_\theta(s)} \Big[f' \Big(\frac{p_\theta(s)}{p_g(s)}\Big)\Big] \mathbbm{E}_{s \sim p_\theta(s)}[ \nabla_\theta \log \pi_\theta(s)] \\
    &= \frac{1}{T} \mathbbm{E}_{\tau \sim p_\theta(\tau)}  \Bigg[ \sum_{t=1}^T \nabla_\theta \log \pi_\theta(s_t) \sum_{t=1}^T f' \Big(\frac{p_\theta(s_t)}{p_g(s_t)}\Big)\Bigg]\\
    &\qquad - T \mathbbm{E}_{s \sim p_\theta(s)} \Big[f' \Big(\frac{p_\theta(s)}{p_g(s)}\Big)\Big] \mathbbm{E}_{s \sim p_\theta(s)}[ \nabla_\theta \log \pi_\theta(s)] \\
    &= \frac{1}{T} \Bigg[ \mathbbm{E}_{\tau \sim p_\theta(\tau)}  \Bigg[ \sum_{t=1}^T \nabla_\theta \log \pi_\theta(s_t) \sum_{t=1}^T f' \Big(\frac{p_\theta(s_t)}{p_g(s_t)}\Big)\Bigg]\\
    &\qquad - \mathbbm{E}_{s \sim p_\theta(s)} \Big[\sum_{t=1}^T f' \Big(\frac{p_\theta(s_t)}{p_g(s_t)}\Big)\Big] \mathbbm{E}_{\tau \sim p_\theta(\tau)}[ \sum_{t=1}^T \nabla_\theta \log \pi_\theta(s_t)]\Bigg] \\
    &= \frac{1}{T} \mathbbm{E}_{\tau \sim p_\theta(\tau)}  \Bigg[ \sum_{t=1}^T \nabla_\theta \log \pi_\theta(s_t) \sum_{t=1}^T f' \Big(\frac{p_\theta(s_t)}{p_g(s_t)}\Big)\Bigg]
\end{align*}
}
\end{proof}

\begin{customthm}{4.2}
    \label{grad_goal_p}
    Updating the policy using the gradient maximizes $\mathbbm{E}_{p_\theta}[\eta_\tau(g)]$.
\end{customthm}

\begin{proof}
In goal-based setting, $p^g$ is sparse, so we need to use the full definition of $f$-divergence, $D_f(p_\theta || p_g) =  \sum_{p_g > 0}\big[ p_{g}(s) f(\frac{p_\theta(s)}{p_g(s)}) \big] + f'(\infty)p_\theta[p_g = 0] = \sum_{p_g > 0}\big[ p_{g}(s) f(\frac{p_\theta(s)}{p_g(s)}) \big] + f'(\infty)(1 - p_\theta(g))$. Differentiating with respect to $\theta$ gives,

\begin{align*}
    \nabla_\theta L(\theta) &= \big(f'(p_\theta(g)) - f'(\infty) \big)\nabla_\theta p_\theta(s) \\
    &=  \big(f'( p_\theta(g)) - f'(\infty) \big) \Big( \frac{1}{T}\int p_\theta(\tau)\eta_\tau(g) \sum_{t=1}^T \nabla_\theta \log \pi_\theta(s_t) d\tau - T p_\theta(s) \mathbbm{E}_{s \sim p_\theta(s)}[ \nabla_\theta log \pi_\theta(s)] \Big) \\
    &=  \big(f'(p_\theta(g)) - f'(\infty) \big) \Big( \frac{1}{T}\mathbbm{E}_{\tau \sim p_\theta(\tau)}\Big[\eta_\tau(g) \sum_{t=1}^T \nabla_\theta \log \pi_\theta(s_t) \Big] \\
    &\qquad - p_\theta(g) \mathbbm{E}_{\tau \sim p_\theta(\tau)}\Big[ \sum_{t=1}^T  \nabla_\theta \log \pi_\theta(s)\Big] \Big) \\
    &= \frac{1}{T} \big(f'(p_\theta(g)) - f'(\infty) \big)\mathbbm{E}_{\tau \sim p_\theta(\tau)}\Big[\sum_{t=1}^T \nabla_\theta \log \pi_\theta(s_t) \eta_\tau(g) \Big]
\end{align*}

The gradient has two terms, the first term $\big(f'(p_\theta(g)) - f'(\infty) \big)$ weighs the gradient based on the value of $p_\theta(g)$ and is always negative. The second term is the gradient of $\mathbbm{E}_{p_\theta}[\eta_\tau(g)]$. Hence using $\nabla_\theta L(\theta)$, we minimize $L(\theta)$ which would imply maximizing $\mathbbm{E}_{p_\theta}[\eta_\tau(g)]$.
\end{proof}

\subsection{Practical Algorithm}
As mentioned in Section 4, the derived gradient is highly sample inefficient. We employ established methods to improve the performance of policy gradients like importance sampling.   

The first modification is to use importance sampling weights to allow sampling from previous policy $\theta'$. The gradient now looks like,
\begin{equation}
     \nabla_\theta J(\theta) = \mathbbm{E}_{\tau \sim p_{\theta'}(\tau)} \Bigg[ \frac{\pi_\theta(\tau)}{\pi_{\theta'}(\tau)}\Big[ \sum_{t=1}^T \nabla_\theta \log \pi_\theta(a_t|s_t) \Big]
     \Big[ \sum_{t=1}^T f' \Big(\frac{p_\theta(s_t)}{p_g(s_t)}\Big)\Big]\Bigg].
\end{equation}
To reduce the variance in the gradients, the objective can be modified to use the causal connections in the MDP and ensure that the action taken at step $t$ only affects rewards at times $t' \rightarrow [t, T]$. Moreover, a discount factor $\gamma$ is used to prevent the sum $\sum_{t'=t}^T f' \Big(\frac{p_\theta(s_t)}{p_g(s_t)}\Big)$ from exploding. 

Additionally, the expectation is modified to be over states rather than trajectories, 
\begin{equation}
     \nabla_\theta J(\theta) = \mathbbm{E}_{s_t,a_t \sim p_{\theta'}(s_t, a_t)} \Bigg[ \frac{\pi_\theta(a_t|s_t)}{\pi_{\theta'}(a_t|s_t)} \nabla_\theta \log \pi_\theta(a_t|s_t) \sum_{t'=t}^T \gamma^{t'}f' \Big(\frac{p_\theta(s_t)}{p_g(s_t)}\Big)\Bigg].
     \label{eqn:pr_alg1}
\end{equation}

This gradient computation is still inefficient, because even though the samples are from a previous policy $\pi_{\theta^{'}}$, it still needs to compute $\sum_{t'=t}^T \gamma^{t'}f' \Big(\frac{p_\theta(s_t)}{p_g(s_t)}\Big)$, requiring iteration through full trajectories. We can add a bias to the gradient by modifying $f' \Big(\frac{p_\theta(s_t)}{p_g(s_t)}\Big)$ to $f' \Big(\frac{p_{\theta'}(s_t)}{p_g(s_t)}\Big)$ in the objective. To ensure the bias is small, an additional constraint needs to be added to keep $\theta'$ close to $\theta$. Following the literature from natural gradients, the constraint we add is $D_{KL}(p_{\theta'}||p_\theta)$. Proximal Policy Optimization \citep{ppo} showed that in practical scenarios, clipped objective can be enough to do away with the KL regularization term. The final objective that we use is, 
\begin{equation}
    \nabla_\theta J(\theta) = \mathbbm{E}_{s_t,a_t \sim p_{\theta'}(s_t, a_t)} \Big[ \min(r_\theta(s_t) F_{\theta'}(s_t), clip(r_\theta(s_t), 1-\eps, 1+\eps) F_{\theta'}(s_t)) \Big],
\end{equation}

where $r_\theta(s_t) = \frac{\nabla_\theta \pi_\theta(a_t|s_t)}{\pi_{\theta'}(s_at|s_t)}$ and $F_{\theta'}(s_t) = \sum_{t'=t}^T \gamma^{t'} f' \Big(\frac{p_{\theta'}(s_t)}{p_g(s_t)}\Big)$.

\subsection{Discounted State-Visitations}

The state-visitation distribution defined so far has not considered a discount factor. To include discounting, the state-visitation frequency gets modified to $\eta_\tau(s) = \sum_{t=1}^T \gamma^t \mathbbm{1}_{s_t = s}$. Throughout the derivation of the gradient, we used $\int \eta_\tau(s)f(s) ds = \sum_{t=1}^T f(s_t)$ but this will be modified to $\int \eta_\tau(s)f(s) ds = \sum_{t=1}^T \gamma^t f(s_t)$. the corresponding gradient will be,
\begin{equation}
     \nabla_\theta J(\theta) = \frac{1}{T}\mathbbm{E}_{\tau \sim p_\theta(\tau)} \Bigg[ \Big[ \sum_{t=1}^T \gamma^t \nabla_\theta \log \pi_\theta(a_t|s_t) \Big]
     \Big[ \sum_{t=1}^T \gamma^t f' \Big(\frac{p_\theta(s_t)}{p_g(s_t)}\Big)\Big]\Bigg].
     \label{eqn:grad_disc}
\end{equation}

This gradient can be modified as before to,

\begin{equation}
     \nabla_\theta J(\theta) = \mathbbm{E}_{s_t, a_t \sim p_\theta(s_t, a_t)}\Big[ \gamma^t \nabla_\theta \log \pi_\theta(a_t|s_t) \sum_{t'=t}^T \gamma^{t'} f' \Big(\frac{p_\theta(s_t)}{p_g(s_t)}\Big)\Big].
     \label{eqn:grad_disc_mod}
\end{equation}

Adding importance sampling to the gradient in Equation \ref{eqn:grad_disc_mod} will give a gradient very similar to Equation \ref{eqn:pr_alg1}. In fact, Equation \ref{eqn:pr_alg1} is a biased estimate for the gradient of the $f$-divergence between the discounted state-visitation distribution and the goal distribution. We can use either of the two gradients but Equation \ref{eqn:pr_alg1} will be preferred for long horizon tasks.

\section{Gridworld Experiments}
\label{gridworld_appendix}
\subsection{Description of the task}

\begin{wrapfigure}[12]{r}{0.25\textwidth}
\centering
    \includegraphics[width=0.22\textwidth]{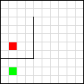}
    \caption{Description of the gridworld: The bold lines show the walls, green square is the start position and the red square is the goal.}
    \label{fig:grid}
\end{wrapfigure}
The task involves navigating a gridworld to reach the goal state which is enclosed in a room. The agent can move in any of the four directions and has no idea where the goal is. It needs to explore the gridworld to find the path around the room to reach the goal. The task is further elaborated in Figure \ref{fig:grid}. The green square represents the agent position while the red square represents the goal.

\textbf{State:} The state visible to the policy is simply the normalized $x$ and $y$ coordinates. 

\textbf{Action:} The action is discrete categorical distribution with four categories one for each - left, top, right and bottom.

\textbf{Reward:} The task reward is $1$ at the goal and $0$ everywhere else. $f$-PG does not require rewards but the baselines use task rewards. 

\subsection{Performance of $f$-PG}

In Section 5.1, we had compared $fkl$-PG and $rkl$-PG with AIM \citep{aim} and GAIL \citep{gail}. In Figure \ref{fig:gridworld_app} we present additional baselines AIRL \citep{airl} and f-AIRL \citep{f-max}. 

\begin{figure}[!h]
    \centering
    \begin{subfigure}[t]{0.30\textwidth}
        \centering
        \includegraphics[width=\textwidth]{src/figs/fkl_grid.jpg}
        \caption{FKL}
    \end{subfigure}
    \hfill
    \begin{subfigure}[t]{0.30\textwidth}
        \centering
        \includegraphics[width=\textwidth]{src/figs/rkl_grid.jpg}
        \caption{RKL}
    \end{subfigure}
    \hfill
    \begin{subfigure}[t]{0.30\textwidth}
        \centering
        \includegraphics[width=\textwidth]{src/figs/aim.jpg}
        \caption{AIM}
    \end{subfigure}
    \hfill
    \begin{subfigure}[t]{0.30\textwidth}
        \centering
        \includegraphics[width=\textwidth]{src/figs/gail.jpg}
        \caption{GAIL}
    \end{subfigure}
    \hfill
    \begin{subfigure}[t]{0.30\textwidth}
        \centering
        \includegraphics[width=\textwidth]{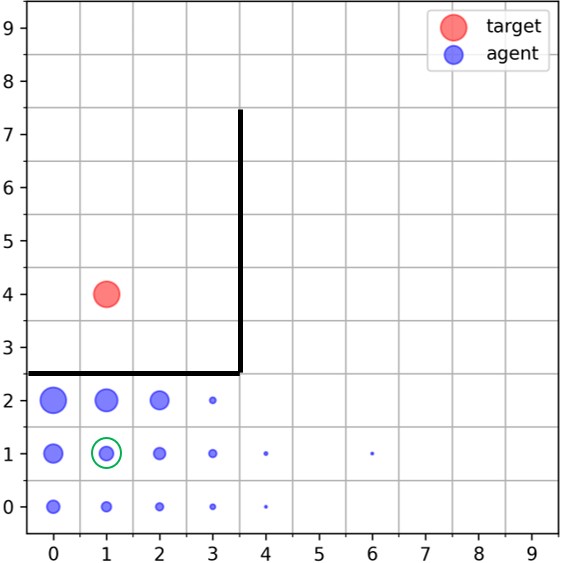}
        \caption{AIRL}
    \end{subfigure}
    \hfill
    \begin{subfigure}[t]{0.30\textwidth}
        \centering
        \includegraphics[width=\textwidth]{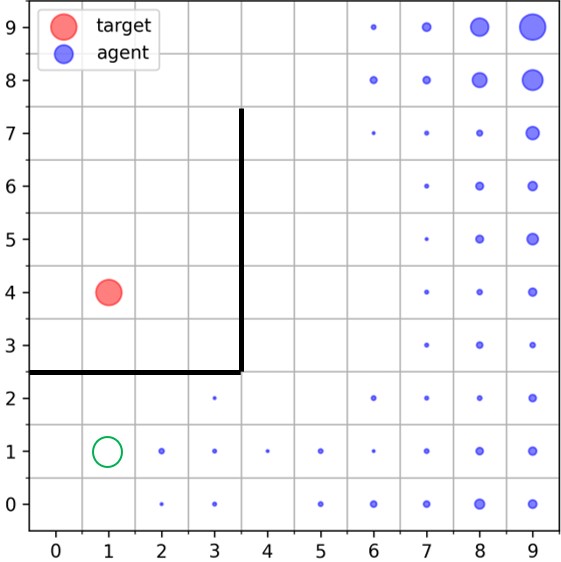}
        \caption{FAIRL}
    \end{subfigure}
    \caption{\small{Gridworld: The agent needs to move from the green circle to the red circle. The state visitations of the final policies are shown when using our framework for training (\emph{fkl}, \emph{rkl}) compared with AIM and GAIL trained on top of soft Q learning.}}
    \vspace{-10pt}
    \label{fig:gridworld_app}
\end{figure}

\section{Visualizing the learning signals}
\label{viz_opt}
\subsection{Description of the task}

\begin{wrapfigure}[13]{r}{0.35\textwidth}
    \vspace{-10pt}
    \centering    
    \includegraphics[width=0.3\textwidth]{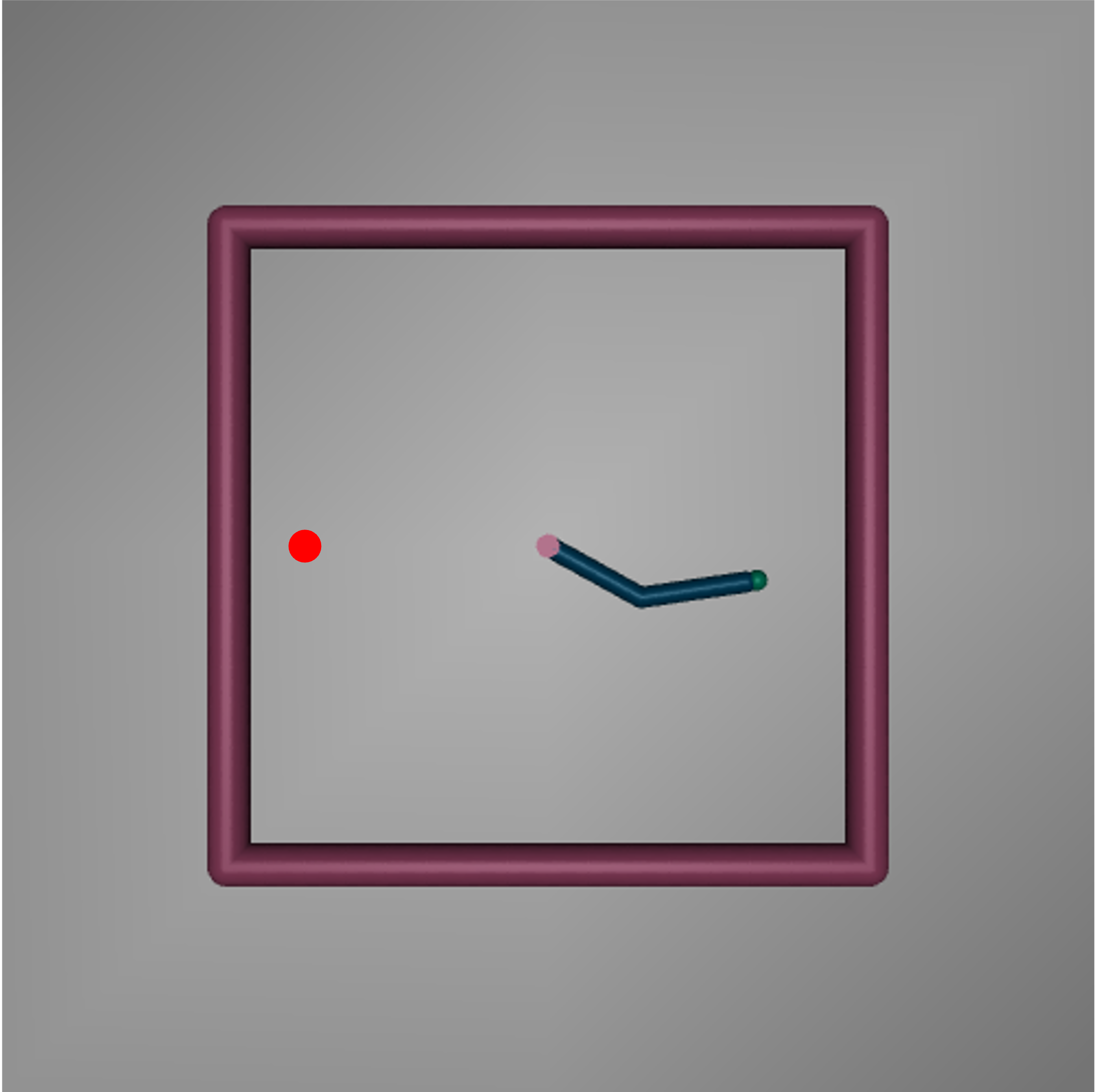}
        \caption{\small{The Reacher environment with fixed goal.}}
        \label{fig:reacher}
\end{wrapfigure}

To visualize the learning signals, we use the Reacher environment (Figure \ref{fig:reacher}) \citep{todorov2012mujoco}. The task involves rotating a reacher arm (two joints with one end fixed). The applied actions (torques) would rotate the arm so that the free end reaches the goal. The goal is fixed to be at $(-0.21, 0)$ and the goal distribution is a normal centred at the goal with a standard deviation of $0.02$. 

\textbf{State: } The state of the original environment contains several things but here we simplify the state space to simply be the position of the free end and the target or the goal position.

\textbf{Actions: } The actions are two dimensional real numbers in $[-1, 1]$ which correspond to the torques applied on the two joint respectively.

\textbf{Reward: } The reward is sparse i.e., $1$ when the goal is reached by the tip of the arm. But $f$-PG does not use rewards for training policies. 

\subsection{Comparing different $f$-PG}

Figure \ref{fig:rew_f_div} shows the evolution of the learning signals for the environment. The \emph{red} regions correspond to signals having a higher value while the darker \emph{blue} regions correspond to signals with low value. 
For $fkl$, the scale of these rewards generally vary from $-10$ to $5$ while for $\chi^2$, the scale varies from $-600$ to $-50$. Also, for the same objective, as the policy trains, these scales generally get smaller in magnitude.

The following can be observed from these plots:
\begin{enumerate}
    \item In all the cases, the signals are maximum at the the goal pulling the state-visitations towards the goal.

    \item All of these also push for exploration. This is most pronounced in $fkl$ and $\chi^2$. These provide significant push towards the unexplored regions which show their inclination towards entropy-maximization, confirming the theory (Lemma \ref{lemma:fkl_div}).
\end{enumerate} 

\begin{figure}[t]
    \centering
    \begin{subfigure}[t]{0.9\textwidth}
        \centering
        \includegraphics[width=\textwidth]{src/figs/rewards.jpg}
        \caption{FKL}
    \end{subfigure}
    \begin{subfigure}[t]{0.9\textwidth}
        \centering
        \includegraphics[width=\textwidth]{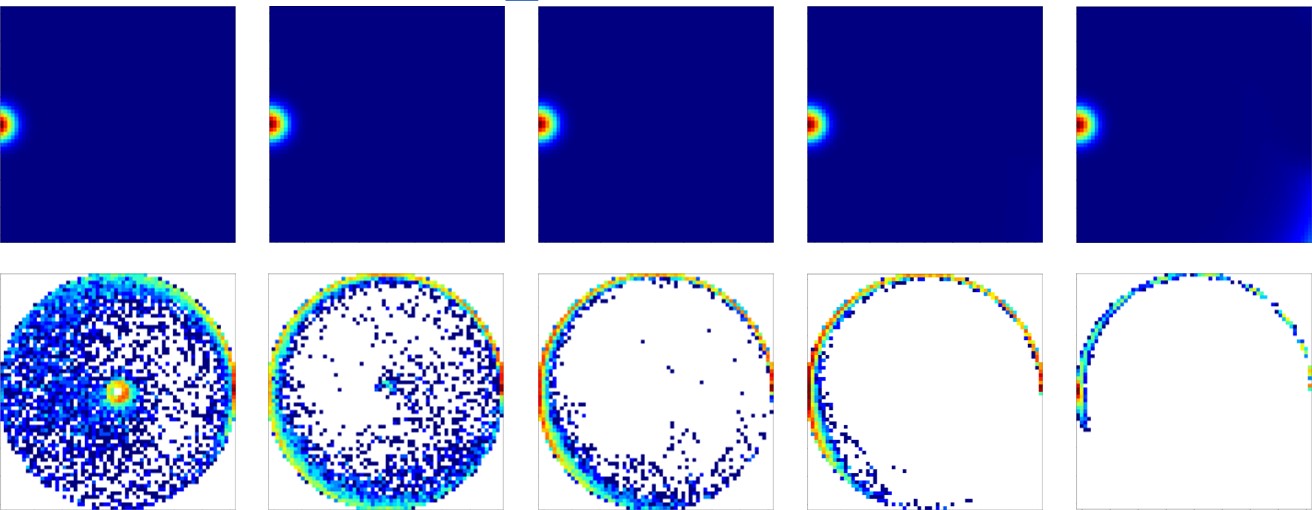}
        \caption{RKL}
    \end{subfigure}
    \begin{subfigure}[t]{0.9\textwidth}
        \centering
        \includegraphics[width=\textwidth]{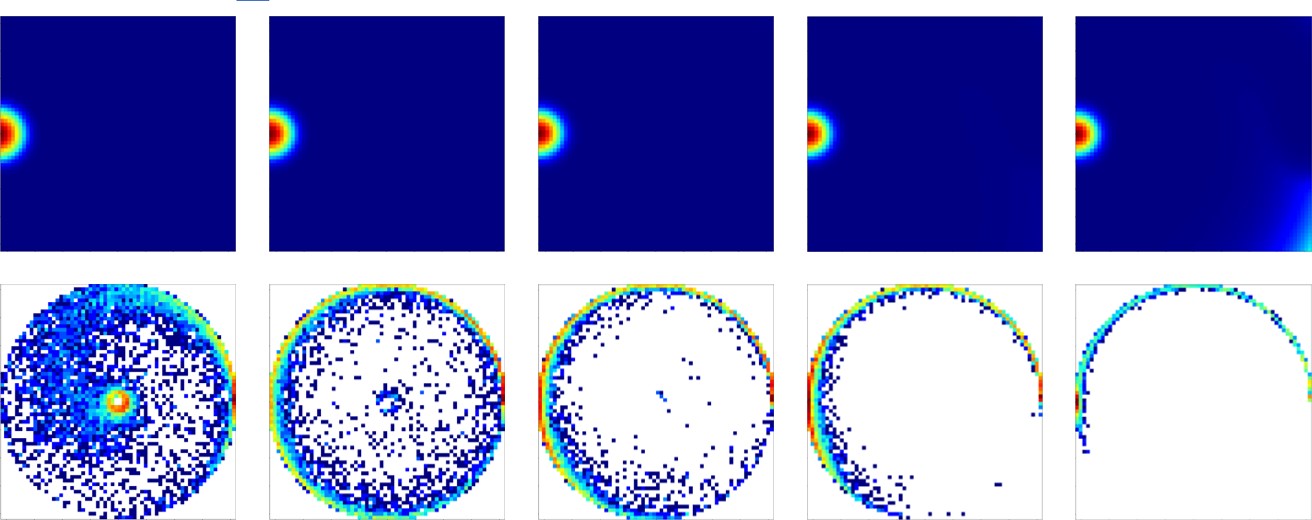}
        \caption{JS}
    \end{subfigure}
    \begin{subfigure}[t]{0.9\textwidth}
        \centering
        \includegraphics[width=\textwidth]{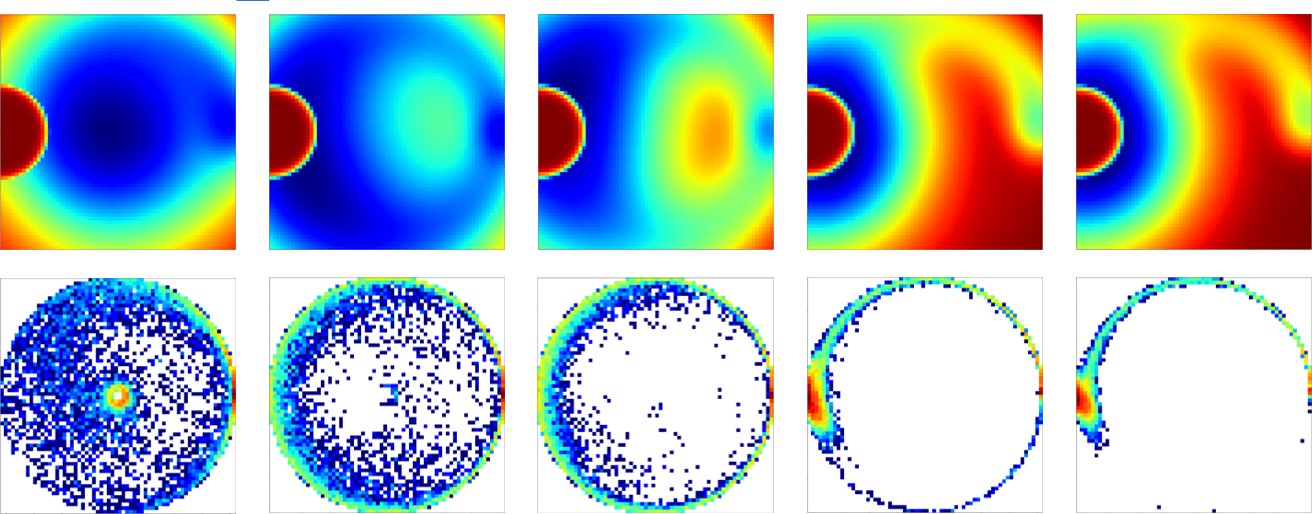}
        \caption{$\chi^2$}
    \end{subfigure}
    \caption{\small{Evolution of $f'(\frac{p_\theta(s)}{p_g(s)})$ along with the corresponding state-visitations for different $f$-divergences - $fkl$, $rkl$, $js$ and $\chi^2$. The scales for the value of these signals are not shown but they are vary as the policy converges. $f$-PG provides dense signals for pushing towards exploration.}}
    \label{fig:rew_f_div}
\end{figure}

\section{PointMaze experiments}
\label{pointmaze_appendix}

PointMaze \citep{d4rl} are continuous state-space domains where the agent needs to navigate to the goal in the 2D maze. The agent and the goal are spawned at a random location in the maze for every episode. There are three levels based on the difficulty of the maze as shown in Figure \ref{fig:pointmaze}. 

\textbf{State:} The state consists of the agent's 2D position and the velocity in the maze. The goal position is appended to the state.

\textbf{Action: } The actions are 2D real numbers in the range $[-1, 1]$ correspond to the force applied to the agent in each of the two directions.

\textbf{Reward: } Although $f$-PG does not use rewards, the baselines use the task reward which is sparse ($1$ when the goal is reached and $0$ everywhere else).

\begin{figure}[H]
    \centering
    \begin{subfigure}[t]{0.3\textwidth}
        \centering
        \includegraphics[width=\textwidth]{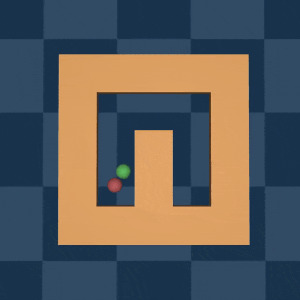}
    \end{subfigure}
    \hfill
    \begin{subfigure}[t]{0.3\textwidth}
        \centering
        \includegraphics[width=\textwidth]{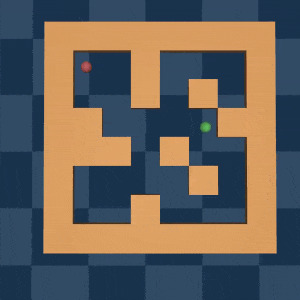}
    \end{subfigure}
    \hfill
    \begin{subfigure}[t]{0.3\textwidth}
        \centering
        \includegraphics[width=\textwidth]{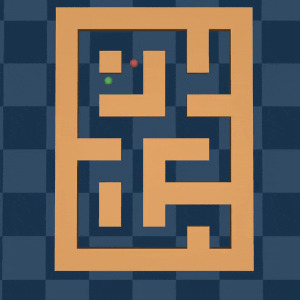}
    \end{subfigure}
    \caption{Description of PointMaze environments:  PointMaze-U (left),  PointMaze-Medium (middle), PointMaze-Large(right).}
    \label{fig:pointmaze}
\end{figure}

For the experiments in Section \ref{pointmaze_expt}, the initial and goal states are sampled uniformly over all the ``VALID'' states i.e., states that can be reached by the agent. Such an initialization allows discriminator-based methods to fulfill their coverage assumptions.  In Section \ref{complex_expt}, the initialization procedure is modified so that the initial state and the goal state are considerably far. This is done by restricting the sampling of the initial and goal states from disjoint (considerably far away) distributions as shown in Figure \ref{fig:complex}.



\end{document}